\useunder{\uline}{\ul}{}
\newtheorem{theorem}{Theorem}
\newtheorem{lemma}{Lemma}  
\newtcbox{\myshadowbox}[1][]{colframe=black!75!white, colback=yellow!10, boxrule=1pt, arc=4mm, auto outer arc, drop shadow, #1}
\title{Reinforcement Speculative Decoding for Fast Ranking}
\author{%
  Du Yingpeng \\
  College of Computing and Data Science\\
  Nanyang Technological University\\
  Singapore\\
  \texttt{dyp1993@pku.edu.cn} \\
  \And
  Wei Tianjun\thanks{Corresponding Author} \\
  College of Computing and Data Science\\
  Nanyang Technological University\\
  Singapore\\
  \texttt{tjwei2-c@my.cityu.edu.hk} \\
  \AND
  Sun Zhu \\
  Information Systems Technology and Design \\
  Singapore University of Technology and Design \\
  Singapore \\
  \texttt{zhu\_sun@sutd.edu.sg} \\
  \And
  Zhang jie \\
  College of Computing and Data Science\\
  Nanyang Technological University\\
  Singapore\\
  \texttt{zhangj@ntu.edu.sg} \\
}
\begin{document}


\maketitle

\begin{abstract}
Large Language Models (LLMs) have been widely adopted in ranking systems such as information retrieval (IR) systems and recommender systems (RSs). To alleviate the latency of auto-regressive decoding, some studies explore the single (first) token decoding for ranking approximation, but they suffer from severe degradation in tail positions. Although speculative decoding (SD) methods can be a remedy with verification at different positions, they face challenges in ranking systems due to their left-to-right decoding paradigm. Firstly, ranking systems require strict latency constraints, but verification rounds in SD methods remain agnostic; Secondly, SD methods usually discard listwise ranking knowledge about unaccepted items in previous rounds, hindering future multi-token prediction, especially when candidate tokens are the unaccepted items. In this paper, we propose a Reinforcement Speculative Decoding method for fast ranking inference of LLMs.  
To meet the ranking systems' latency requirement, we propose an up-to-down decoding paradigm that employs an agent to iteratively modify the ranking sequence under a constrained budget. Specifically, we design a ranking-tailored policy optimization, actively \textbf{\textit{exploring}} optimal multi-round ranking modification policy verified by LLMs via reinforcement learning (RL). To better approximate the target LLM under the constrained budget, we trigger the agent fully \textbf{\textit{utilizing}} the listwise ranking knowledge about all items verified by LLMs across different rounds in RL, enhancing the modification policy of the agent. More importantly, we demonstrate the theoretical robustness and advantages of our paradigm and implementation. Experiments on both IR and RS tasks show the effectiveness of our proposed method.

\end{abstract}


\section{Introduction}\label{sec:intro}
Recently, Large Language Models (LLMs) have demonstrated extensive knowledge and reasoning abilities across various domains. Ranking systems, notably information retrieval (IR) systems and recommender systems (RSs), have experienced significant advancements due to the integration of LLMs \cite{wang2025re2llm,pradeep2023rankvicuna,zhu2023large,du2024enhancing}. Typically, ranking systems aim to output a ranking (sequence) of candidate passages based on their ``relevance'' to a given query. For example, the query in RSs can refer to a specific user with her behaviors, so RSs aim to rank the candidate items based on the user's preferences \cite{du2025quasi}. Despite their remarkable effectiveness by integrating LLMs, a significant barrier remains: the inherent latency due to the auto-regressive decoding scheme of LLMs severely limits the practical applicability of LLM-based ranking systems. To mitigate this issue, current approaches primarily adopt two decoding strategies: Single Token Decoding (STD) and Speculative Decoding (SD).


STD-based methods \cite{reddy2024first,pradeep2023Rankzephyr,pradeep2023rankvicuna} attempt to solely utilize the output logits of the first generated identifier (aka single LLM encoding) to approximate a ranking of candidates, where a higher logit indicates a higher ranking score of the item. Specifically, FIRST \cite{reddy2024first} and Rankzephyr \cite{pradeep2023Rankzephyr} propose to conduct reranking-based supervised fine-tuning (SFT) based on the correctness of relative passage orders. 
\begin{wrapfigure}{r}{0.55\linewidth}
    \vspace{-5pt}  
    \centering
    \includegraphics[width=\linewidth]{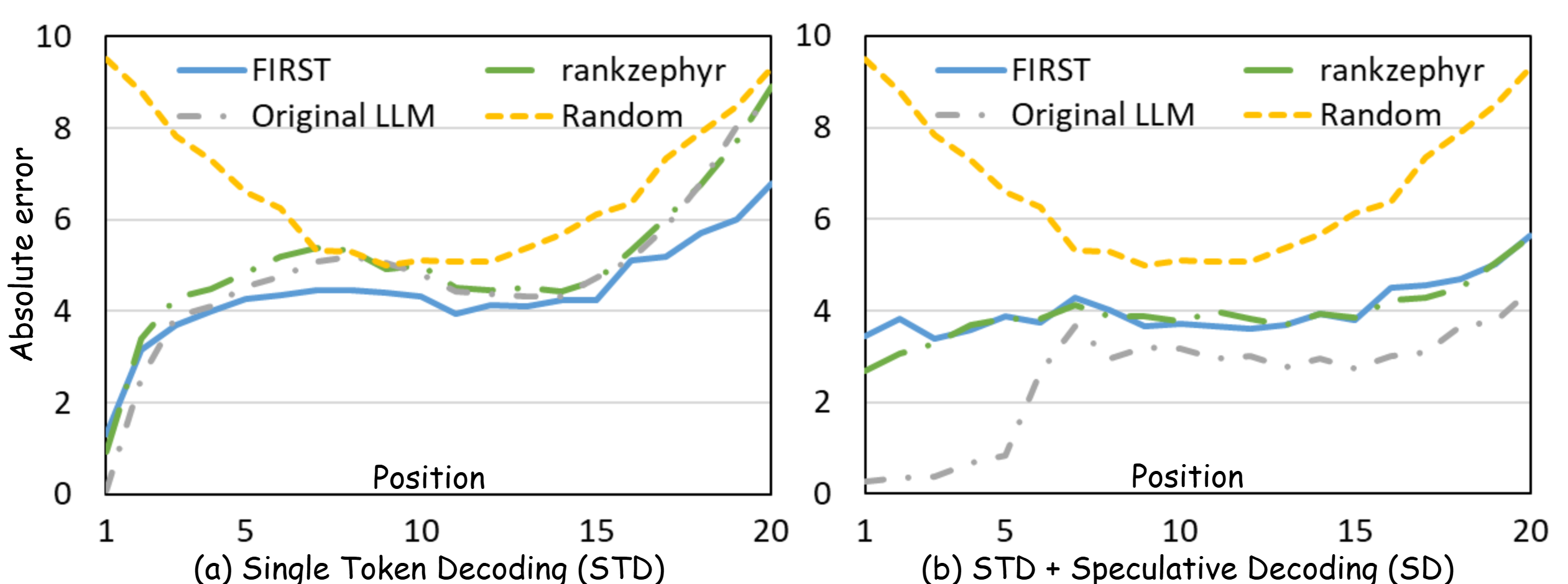}
    \vspace{-15pt}  
    \caption{{\small Comparison among different methods and decoding strategies on MS MARCO dataset with Llama3.2-3B-Instruct, where SD requires 5×LLM encoding.}} 
    \label{fig:1}
    \vspace{-15pt}  
\end{wrapfigure}
\autoref{fig:1} (a) shows the position-aware absolute error of the ranking indicated by FIRST \cite{reddy2024first}, Rankzephyr \cite{pradeep2023Rankzephyr}, the original LLM with STD, and random ranking \cite{nguyen2016ms}. 
It indicates that the STD strategy can effectively approximate the top-half items, showing its potential to reduce LLM inference latency for ranking tasks. However, their error becomes unacceptably large in the lower positions, indicating that these methods suffer from severe degradation in the tail rankings. Worse still, as shown in \autoref{fig:1} (b), STD methods (e.g., FIRST and Rankzephyr) can hardly be enhanced by SD strategy \cite{leviathan2023fast}. This is attributed to STD methods rely on fine-tuning LLMs, which leads to distribution drift and results in inaccurate verification during SD. To tackle these problems, we propose a multi-round modification (decoding) strategy without fine-tuning LLMs. By verifying and modifying rankings at different positions, we can bridge between the predicted ranking and the target ranking (i.e., auto-regressive \underline{\textit{ranking}} generated by the \underline{target} LLM).

SD methods can fasten text generation while preserving consistency at different positions (e.g., tail position), indicated by line of ``original LLM'' in \autoref{fig:1} (b). Specifically, they divide the inference process into iterations of a low-cost draft stage and a parallelized verification stage, where each verification allows accept multiple consistent tokens over the draft in a single LLM encoding. However, they are not one-size-fits-all solutions for LLM-based ranking systems due to their left-to-right decoding paradigm, as shown in \autoref{intro_2} (a).
\textit{\textbf{First, ranking systems usually have strict latency requirements for response.}} Specifically, ranking systems such as RSs and IR systems typically employ an ensemble framework (e.g., mixture of experts) with various features. If a feature response (e.g., the LLM’s outputs) exceeds a pre-defined latency threshold, it will receive a \textit{NULL} value during serving. Therefore, the existing SD strategy makes the rounds of verification agnostic, hindering the latency guarantees required by ranking systems in real-world applications.
\textit{\textbf{Second, ranking tasks are essentially listwise sensitive, especially unaccepted items will be the candidate tokens for future LLM decoding.}}  However, existing SD methods rely on a directed (left-to-right) multi-token prediction paradigm — i.e., not only predicting multiple independent tokens but also discarding listwise ranking knowledge about unaccepted items within each round. This leaves the fine-grained listwise ranking relationships among all items largely unexplored, e.g., how the ranking of accepted items influences the future token prediction, hindering accurate multi-token prediction.

\begin{figure}
    \centering
    \includegraphics[width=0.8\linewidth]{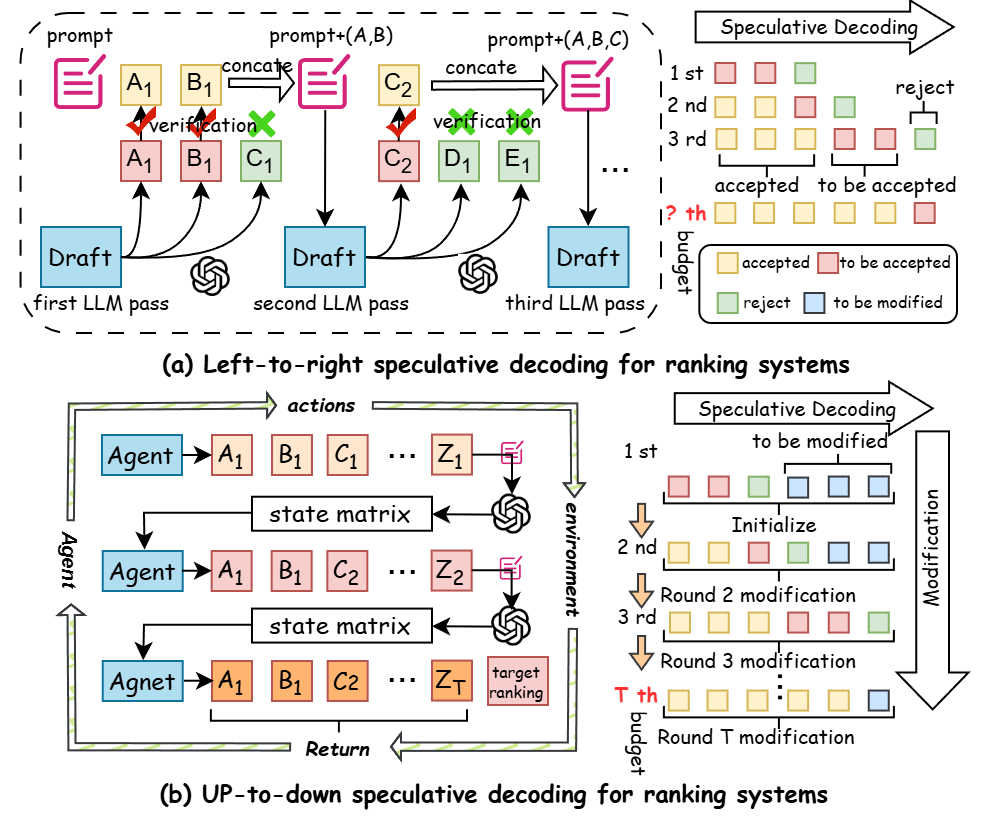}
    \caption{{\small Comparison between existing left-to-right and our up-to-down speculative decoding.}}
    \label{intro_2}
\end{figure}

To address the latency constraints in ranking systems, we propose an up-to-down decoding paradigm, as shown in \autoref{intro_2} (b), where the ranking can be iteratively modified within a constrained budget. As our goal is to find an optimal modification strategy with LLMs' verification to approximate the target ranking in the final round, simply conducting supervised learning may lead to local optima. To this end, we propose training a lightweight agent via reinforcement learning (RL) tailored for ranking-based SD. Specifically, we first pretrain the agent using supervised learning based on the target ranking of LLMs, and then fine-tune the agent by \textbf{\textit{exploring}} optimal strategies for SD. Inspired by the group relative policy optimization method \cite{shao2024deepseekmath}, we propose a ranking-tailored policy optimization in RL, parallelly sampling multiple modification actions of the agent in each round. Then, we optimize the agent based on the final return of different trajectories composed of multi-round modifications, encouraging learn optimal policy based on the high return. To better approximate the target ranking under the constrained budget, we trigger the agent fully \textbf{\textit{utilizing}} the historical knowledge of listwise rankings verified by LLMs across different rounds in RL, capturing token- and round-level dependency in item rankings for agent policy enhancement. More importantly, we theoretically elucidate the relationship between our method and established RL methodologies, demonstrating the theoretical robustness and advantages of our implementation.

In summary, we propose a Reinforcement Speculative Decoding (RSD) method for fast LLM inference in ranking systems. Specifically, we introduce a multi-round modification method that requires no fine-tuning of LLMs, and is trained via RL to iteratively modify rankings within a constrained budget. 
We theoretically elucidate the rationale and implementation of RSD, and empirically validate the effectiveness of our proposed method on both IR and RS tasks.

\section{Related Work}\label{sec:related_work}
\textbf{LLMs for ranking.}
Recently, LLMs have been widely used in ranking systems such as IR systems and RSs. Typically, these methods commonly employ a multi-stage pipeline: first selecting several candidates from the large-scale item corpus, and then ranking them by an LLM ranker. For example, several methods \cite{wang2025re2llm,wu2024survey,hou2024large} in RSs employed LLMs as recommenders to directly rank items. Recent works \cite{pradeep2023Rankzephyr,pradeep2023rankvicuna,zhu2023large} in IR tasks adopted listwise ranking to generate the ranking of candidates directly. Although the effectiveness of these methods, the extensive computational nature of LLMs makes LLM-based ranking come at the expense of increased latency \cite{meng2024ranked,parry2024top,reddy2024first,du2025active}. Among these methods, FIRST \cite{reddy2024first} is the first trial to speed up the LLM decoding, exploring lowering the number of output tokens required to be generated to one. However, it suffers from severe degradation in tail position and is hard to alleviate by the SD strategy due to distribution drift. 


\textbf{Speculative decoding (SD).}
SD aims to accelerate decoding of LLMs by drafting and verifying multi-token sequences based on the LLM encoding, ensuring output consistency to auto-regressive decoding \cite{xia2024unlocking,stern2018blockwise,xia2023speculative,leviathan2023fast}. 
According to the drafting strategies, existing methods can be categorized into \textit{independent drafting} and \textit{self-drafting} based methods. Independent-drafting-based method leverages an external low-cost model. Among these methods, directly utilizing a smaller version of the target model is a popular solution \cite{leviathan2023fast,chen2023accelerating,spectoraccelerating,chen2024cascade,chen2024sequoia}, because the availability of existing LLM series models such as OPT \cite{zhang2022opt} and LLaMA \cite{grattafiori2024llama,touvron2023llama1,touvron2023llama2}. Besides these methods, Xia et al. \cite{xia2023speculative} proposed employing a specialized Non-auto-regressive Transformer that drafts multiple tokens simultaneously per step. Self-drafting-based methods leverage the target LLM itself for efficient drafting \cite{stern2018blockwise,santilli2023accelerating,hooper2023speed,cai2024medusa,fu2024break,du2024glide}. For example, Blockwise Decoding \cite{stern2018blockwise} and Medusa \cite{cai2024medusa} incorporated FFN heads after LLM encoding, enabling the parallel token generation per round. Several methods adopt early exiting and layer skipping strategies within LLMs for drafting \cite{yangpredictive,zhang2024draft,hooper2023speed}. However, they mainly generate independent multi-tokens based on previous logits, leaving listwise ranking relations of logits and their outputs largely unexplored.
According to verifying strategies, existing methods can be categorized into independent \textit{greedy
decoding}, \textit{speculative sampling}, and \textit{token tree verification} based methods. Greedy
decoding adopts the greedy verification strategies to generate outputs that are exactly \cite{stern2018blockwise,hooper2023speed,santilli2023accelerating,yangpredictive,zhang2024draft,fu2024break} or approximately \cite{stern2018blockwise,xia2023speculative,kim2023speculative} the same as the auto-regressive decoding results. Speculative-sampling-based methods adopt different sampling methods to accelerate the target LLM’s inference without changing its original distribution, including generating an identical distribution \cite{liu2024online,chen2023accelerating,chen2024cascade,monea2023pass,yang2024multi} and an approximated distribution \cite{leviathan2023fast,zhoudistillspec}. Token-tree-verification-based methods propose to verify multiple draft sequences in a parallel way \cite{miao2024specinfer,spectoraccelerating,sun2023spectr,he2024rest,cai2024medusa,li2024eagle}. Specifically, they merge multiple candidate draft sequences into a token tree by sharing prefixes, followed by a tree-masked attention to verify the token tree. However, these methods rely on a left-to-right decoding paradigm and make the latency of decoding for all ranking items agnostic, failing to work with ranking systems that usually have strict latency requirements for response.

\section{Preliminary}
This section first introduces the background of auto-regressive decoding and speculative decoding (SD), and then formulates the problem of ranking systems, clarifying its difference from tasks addressed by existing SD methods.

\textbf{Auto-regressive decoding in LLMs.} 
LLMs take discrete token sequences as input, which is denoted as $x = (x_1, x_2, \ldots, x_s) \in \mathbb{T}^s$ of length $s$, where $\mathbb T$ denotes the token corpus. We denote $x^t_{1:m} = (x_1, x_2, \ldots, x_m)$ as a slice of $x$ of length $m$ at round $t$. The output of LLMs characterizes the probability distribution of the next token. Specifically,
The probability of the $t$-th token is determined by all previous (including generated) tokens, represented as $P_\text{llm}(x_t | x_{1:t-1})$. Then, the next input token $x_t$ is obtained by sampling from $P_\text{llm}(x_t | x_{1:t-1})$ using different methods such as greedy, top-$K$ sampling~\cite{kool2020ancestral}. We define $x^0$ as the query (or prompt) tokens, and require the LLM to generate an output sequence of length $m$ from $x^0$. Let $y_i$ denote the token generated at step $i$. In this paper, we adopt the greedy auto-regressive decoding strategy for reproducibility, which can be formulated as $y_i=\text{argmax}P_\text{llm}(y_i|x^0,y_{1:i-1})$ for $i=1,\cdots,m$. 

\textbf{Draft-then-verification paradigm in SD}. The main idea of SD is to speculate multiple potential future tokens (i.e., draft phase) and then subsequently confirm the correctness of these tokens within a single LLM encoding (i.e., verification phase).  Take greedy SD as an example. {\textit{In draft phase}}:  give the prompt $x^0$ and tokens $y_{1:t-1}$ generated so far at step $t$, we can use a draft model to generate a $n$ sequential tokens {\small$\hat{y}_{t:t+n-1}$}. {\textit{In verification phase}}, with encoding {\small$P_\text{llm}(\hat{y}_{t:t+n-1}|x^0,y_{1:t-1})$} by LLMs, we can get probabilities {\small$\{P_\text{llm}(\cdot|x^0,y_{1:t-1},\hat{y}_{t:t+i})\}_{ i= 0,\cdots ,n-1}$} by triggering the LLM \textbf{\textit{only once}} owing to its causal structure, where {\small$P_\text{llm}(\cdot | x^0,y_{1:t-1},\hat{y}_{t:t+i})$} denotes the probability of next token is determined by all previous (including prompt and generated) tokens. Finally, we can verify if {\small$\hat{y}_{t+i}=\text{argmax}_yP_\text{llm}(y|x^0,y_{1:t-1},\hat{y}_{t:t+i-1})$} for each $i$ from $i = 0$ to $i = n - 1$.  If there is a match, we accept this token and proceed; otherwise, we stop checking and drop subsequent tokens, and repeat this process until meeting token {\small$[EOS]$}. 


\textbf{Problem formulation for ranking systems.} For a query (or user) $q\in Q$, the role of LLMs is to rank $K$ candidate items $\mathcal D_q=\{d_1,\dots,d_K\}$ (e.g., documents in IRs and products in RSs) based on their reasoning (see detailed prompts $x^{0}$ in \autoref{sec:prompt}). Specifically, these candidate items can be noted by different identifiers, e.g., |A| $\sim$ |Z|, which can be used for ranking by LLMs (e.g., |B| > |A| ... > |E|). In this paper, we aim to approximate the \underline{t}arget \underline{r}anking sequence $\sigma_{tr}$ with constrained budget $T$, i.e.,
{\small $\min_{{\sigma}\gets \mathcal{M}} \text{diff}({\sigma},{\sigma_{tr}}), s.t., |\mathcal M| \le T$,} where {\small $\mathcal M=\{P_\text{llm}(b|a)\}$} denotes the necessary calls for LLM encoding to obtain predicted ranking ${\sigma}$.  Budget $T$ denotes the maximum times to trigger the LLM for sentence encoding.

\textbf{Difference to decoding tasks in existing SD methods.} SD tasks focus on the general text generation, aiming to minimize the encoding budget with approximating the generation $y$ generated by auto-regressive decoding of target LLMs, i.e., {\small$\min_{{\sigma}\gets \mathcal{M}} |\mathcal M| , s.t., \text{diff}({\sigma},{\sigma_{tr}})\le \tau$}, where $\tau$ denotes the tolerance of approximation measured by $\text{diff}(\cdot,\cdot)$ and $\tau=0$ denotes the exactly same decoding. Therefore, existing SD methods may not meet the requirements in ranking systems, leaving it an unexplored problem that is urgent to be solved.

\section{Methodology}\label{sec:method}

\textbf{Overview}. Section \ref{sec:31} introduces the up-to-down decoding paradigm under constrained budget. Section \ref{sec:32} proposes a ranking-tailored policy optimization in RL to explore optimal policy. Section \ref{sec:33} details the implementation of agents and rewards in RL by utilizing the listwise ranking knowledge. 

\subsection{Up-to-down decoding paradigm}\label{sec:31}
To approximate target ranking $\sigma_{tr}$ under a constrained budget $T$, we propose an up-to-down decoding paradigm that iteratively modifies the ranking. We denote {\small$ \sigma_{t} = [d^t_{(1)}\succ\dots\succ d^t_{(K)}]$} as the ranking modified by the agent {\small$\mathcal A$} at the $t$-th round, where {\small$d^t_{(i)}\succ d^t_{(j)}$} indicates that item {\small$d^t_{(i)}$} is ranked ahead of item {\small$d^t_{(j)}$} by the agent. Specifically, the agent {\small$\mathcal A$} follows the policy (network) {\small$\pi_\theta(\sigma_{t}| q,s_{<t})$}, which modifies the ranking to {\small$\sigma_{t}$} based on the query $q$ and the current state {\small$s_{<t}$}, where {\small$s_{<t}$} contains cached LLM encodings of the previous rankings, i.e., {\small$s_{<t}={P_\text{llm}(\sigma_i|x^{0}_q)\ |\ i=1,\cdots,t-1}$}.

To model the listwise ranking of all candidate items using the agent policy, we employ a relevance network to produce deterministic scores for all items, i.e., {\small $h_\theta(\mathcal{D}| q,s_{<t})=[h_\theta(d_1| q,s_{<t}),\cdots,h_\theta(d_K| q,s_{<t})] \in \mathbb{R}^K$}, where a higher score indicates a higher likelihood of being ranked at the top. Specifically, we adopt the Bradley–Terry probability {\small$\pi_\theta(\sigma|q,s_{<t})$}, which models a stochastic permutation to formulate the ranking $\sigma_t$ of these items, i.e.,

\vspace{-10pt}
{\small\begin{equation*}\label{eq:rum}
  \pi_\theta(\sigma_{t}| q,s_{<t}) =\Pr[d^t_{(1)}\succ\dots\succ d^t_{(K)}|q,s_{<t} \bigr]= \prod\nolimits_{d^t_i \succ d^t_j}\Pr[d^t_i\succ d^t_j|q,s_{<t}], 
\end{equation*}}\noindent where {\small $\Pr[d^t_i\succ d^t_j]=\text{Sigmoid}(h_\theta(d_i| q,s_{<t})-h_\theta(d_j| q,s_{<t}))$}. To align our agent policy with SD, we design our policy $\sigma_t\sim\pi_\theta(\cdot|q,s_{<t})$ based on the greedy verification and Bradley–Terry distribution, 

\vspace{-10pt}
{\small\begin{equation}\label{def_sigma}
\sigma_t[i] = 
\begin{cases}
\sigma_{t-1}[i] & \text{if } i \le i_t^*, \\ 
\text{argmax}P_\text{llm}(d|x^{0},\sigma_{t-1}[:i_t^*]) & \text{if }i=i_t^* +1, \\
\tilde{\sigma}_t[i]   & \text{else }  \tilde{\sigma}_t\sim \Pr[\cdot|q,s_{<t} \bigr],
\end{cases}
\end{equation}}\noindent where {\small 
$i_t^* = \max\{ i | \forall j \in [0, i],\ \sigma_{t-1}[j] = \arg\max_d P_\text{llm}(d | x^{0}, \sigma_{t-1}[:j]) \}$} denotes the longest prefix of $\sigma_{t-1}$ that is consistent with greedy decoding from model conditioned on the prompt. Note that {\small $\{P_\text{llm}(d|x^{0},\sigma_{t-1}[:i])\}_{i=0,\cdots,K}$} can be calculated by once LLM encoding {\small$P_\text{llm}(\sigma_{t-1}|x^{0})$} owing to its causal structure.  To verify the policy $\sigma_{t}\sim\pi_\theta(\cdot| q,s_{<t})$ is tailored for SD, we prove\footnote{All detail proofs in this paper can be found in \autoref{sec:proof}} the strict increasing monotonicity of $i^*_t$ for $t$.
\begin{theorem} \textbf{(Monotonicity.) }
For $i^*_T<K$, we have $i^*_t<i^*_{t+1}$ for all $t<T$.
\end{theorem}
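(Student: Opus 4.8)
The plan is to prove the one-step statement that $i^*_{t+1} \ge i^*_t + 1$ whenever $i^*_t < K$, which is essentially forced by the construction of $\sigma_t$ in \eqref{def_sigma}, and then to discharge the global hypothesis $i^*_T < K$ by a short boundedness argument. Everything reduces to verifying that the length-$(i^*_t+1)$ prefix of $\sigma_t$ is itself consistent with greedy decoding conditioned on $x^0$; by the maximality in the definition of $i^*_{t+1}$ this immediately yields $i^*_{t+1} \ge i^*_t + 1 > i^*_t$.

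First I would inspect the preserved portion. For every position $j \le i^*_t$ the first case of \eqref{def_sigma} gives $\sigma_t[j] = \sigma_{t-1}[j]$, and since all earlier positions coincide as well the prefixes agree, $\sigma_t[:j] = \sigma_{t-1}[:j]$. Hence the conditioning context handed to the LLM is literally unchanged, so the defining property of $i^*_t$ — namely $\sigma_{t-1}[j] = \arg\max_d P_\text{llm}(d \mid x^0, \sigma_{t-1}[:j])$ for all such $j$ — transfers verbatim to $\sigma_t$. Next I would treat the corrected position $j = i^*_t + 1$: the second branch sets $\sigma_t[i^*_t+1] = \arg\max_d P_\text{llm}(d \mid x^0, \sigma_{t-1}[:i^*_t])$, and because $\sigma_t[:i^*_t] = \sigma_{t-1}[:i^*_t]$ this equals $\arg\max_d P_\text{llm}(d \mid x^0, \sigma_t[:i^*_t])$, i.e. exactly the greedy token at that slot. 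Combining the two observations, $\sigma_t[j] = \arg\max_d P_\text{llm}(d \mid x^0, \sigma_t[:j])$ holds for all $j \in [0, i^*_t+1]$, and the maximality defining $i^*_{t+1}$ then forces $i^*_{t+1} \ge i^*_t + 1$.

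It remains to make the strict inequality hold for every $t < T$, which is where the hypothesis $i^*_T < K$ enters. Note the preserved-prefix step alone, applied to positions $j \le i^*_t$, already gives the weaker bound $i^*_{t+1} \ge i^*_t$ \emph{unconditionally}; thus $(i^*_t)_t$ is non-decreasing and bounded above by $i^*_T$. Feeding in $i^*_T < K$ gives $i^*_t \le i^*_T < K$ for every $t \le T$, so in particular $i^*_t < K$ for all $t < T$. This activates the corrected branch of \eqref{def_sigma} (the position $i^*_t+1$ genuinely exists) and upgrades $i^*_{t+1} \ge i^*_t$ to $i^*_{t+1} \ge i^*_t + 1$, completing the argument.

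I expect the main obstacle to be \emph{bookkeeping} rather than any deep idea. Two points need care: (i) the conditioning context seen by $P_\text{llm}$ must be shown to be identical for $\sigma_t$ and $\sigma_{t-1}$ along the preserved prefix, since otherwise the argmax could shift and the greedy-consistency would not transfer; and (ii) the boundary case $i^*_t = K$ must be excluded before invoking the second branch, which is exactly what $i^*_T < K$ guarantees through the monotonicity bound above. A secondary subtlety is confirming that the argmax token placed at position $i^*_t+1$ is a legitimate, previously unranked item so that $\sigma_t$ stays a valid permutation; this follows from the target ranking being a permutation generated by greedy decoding, but it is worth stating explicitly.
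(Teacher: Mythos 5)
Your proof takes essentially the same route as the paper's: you show the preserved prefix of $\sigma_t$ remains greedy-consistent (the paper's step (a)) and that the corrected position $i_t^*+1$ is greedy-consistent by construction since the conditioning prefixes coincide (the paper's step (b)), so maximality in the definition of $i_{t+1}^*$ yields $i_{t+1}^* \ge i_t^*+1 > i_t^*$. Your extra bootstrap --- the unconditional bound $i_{t+1}^* \ge i_t^*$ plus boundedness by $i_T^* < K$ to guarantee that position $i_t^*+1$ actually exists at every round $t < T$ --- is a point the paper leaves implicit, and it cleanly justifies a step the paper's write-up glosses over.
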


\subsection{Ranking-tailored policy optimization in RL}\label{sec:32}

In this section, we aim to learn an optimal policy network {\small$\pi_\theta(\cdot)$} that can iteratively modify the predicted ranking toward the target ranking under a constrained budget.  

To actively \textbf{\textit{explore}} optimal multi-round ranking modification policy {\small$\pi_\theta(\cdot)$}, we parallely sample multiple modification actions of the agent in each round. Specifically, this process can be formulated as a $T$-round trajectory using RL, i.e., {\small $\sigma = (s_0,\sigma_1,r_1,\cdots,s_{<T-1},\sigma_T,r_T)$}, where the return $R_T$ is defined as the final reward $r_T$, i.e., {\small$R_T = r_T$}. To learn 
{\small$\pi_\theta(\cdot)$}, we sample $G$ independent trajectories {\small$\mathcal G_q=\{\sigma^{(1)},\dots,\sigma^{(G)}\}$} corresponding to query $q$, where each trajectory {\small $\sigma^{(i)}=[\sigma^{(i)}_1,\cdots,\sigma^{(i)}_T]$} represents the rankings modified by the policy {\small$\pi_{\theta}(\cdot)$} at different rounds. We formulate the objective of the RL process as a Ranking-tailored Policy Optimization (RPO) goal:

\vspace{-10pt}
{\small\[
    \mathcal{L}_{RPO} = \underbrace{\mathbb{E}_{[q\sim Q,\mathcal{G}_q\sim\pi_{\theta}(\cdot|q)]}\frac{1}{G}\sum_{i=1}^{G}\frac{1}{T}\sum_{t=1}^T\hat{A}_i\log\pi_\theta(\sigma_t^{(i)}|q,s^{(i)}_{<t})}_{\mathcal{L}_{RPO-1}} + \beta\cdot \text{KL}\left(\frac{h_\theta(\mathcal{D}_q| q,s^{(i)}_{<t})}{h_{ref}(\mathcal{D}_q|q,s^{(i)}_{<t})}\right),
\]}

\vspace{-10pt}
\noindent where $\pi_\theta$ and $\pi_{ref}$ are the policy network and reference model, and {\small$h_\theta$} and {\small$h_{ref}$} denote their relevance networks.
$q$, {\small$\sigma_t^{(i)}$} denote the query sampled from the question set and ranking modified in $t$-th round based on the policy {\small$\pi_\theta(\sigma_t^{(i)}|q,s^{(i)}_{<t})$}, respectively. {\small$s^{(i)}_{<t}$} denote the state that contains information before $t$-round. {\small$\hat{A}_i$} denotes the advantage calculated based on the relative rewards of the outputs, whose implementation is detailed in Section \ref{sec:advantage}.  Theoretically, the proposed RPO objective can be seen as the ranking-tailored GRPO objective \cite{shao2024deepseekmath} when {\small$\pi_{\theta}/\pi_{\theta_{old}}\to 1$} and {\small$\mathcal{G}\sim\pi_{\theta_{old}}$} as follows:

\begin{theorem}
 If we have $\pi_{\theta}/\pi_{\theta_{old}}\to 1$ (i.e., $|\pi_{\theta}/\pi_{\theta_{old}}- 1|<\epsilon$) and $\mathcal{G}\sim\pi_{\theta_{old}}$, we have our RPO objective is equivalent to GRPO objective (Equation (3) in \cite{shao2024deepseekmath}) in the first term w.r.t. the model parameter $\theta$, that is,

 \vspace{-15pt}
 {\small 
\begin{equation*}
    \mathcal{L}_{RPO-1} \approx \mathbb{E}_{Q,\pi_{\theta_{old}}}\frac{1}{G}\sum_{i=1}^{G}\frac{1}{T}\sum_{t=1}^T\{\min\left[\frac{\pi_\theta(\sigma_t^{(i)}|q,s^{(i)}_{<t})}{\pi_{\theta_{old}}(\sigma_t^{(i)}|q,s^{(i)}_{<t})}\hat{A}_i,\text{clip}\left(\frac{\pi_\theta(\sigma_t^{(i)}|q,s^{(i)}_{<t})}{\pi_{\theta_{old}}(\sigma_t^{(i)}|q,s^{(i)}_{<t})},1-\epsilon,1+\epsilon \right)\hat{A}_i\right] + C
\end{equation*}}\noindent where $C$ is a $\theta$-independent constant.
\end{theorem}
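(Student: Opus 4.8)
The plan is to strip the GRPO clipped surrogate down to a plain importance-weighted objective and then connect the importance weight to the log-probability through a first-order expansion around a unit ratio. Throughout I would treat the advantages $\hat{A}_i$ as detached constants with respect to $\theta$ (the standard policy-gradient convention, since they are computed from sampled rewards), and abbreviate the per-round importance ratio as $r_t^{(i)} = \pi_\theta(\sigma_t^{(i)}|q,s^{(i)}_{<t}) / \pi_{\theta_{old}}(\sigma_t^{(i)}|q,s^{(i)}_{<t})$. The first step uses the hypothesis $|r_t^{(i)}-1|<\epsilon$ to disable the clip: since every ratio lies inside $(1-\epsilon,1+\epsilon)$ we have $\text{clip}(r_t^{(i)},1-\epsilon,1+\epsilon)=r_t^{(i)}$, so both arguments of the $\min$ coincide and equal $r_t^{(i)}\hat{A}_i$ irrespective of the sign of $\hat{A}_i$. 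Hence the GRPO first term collapses to $\mathbb{E}_{Q,\pi_{\theta_{old}}}\frac1G\sum_i\frac1T\sum_t r_t^{(i)}\hat{A}_i$.

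Next I would expand the ratio about $r=1$. Because $\log r=(r-1)-\tfrac12(r-1)^2+\cdots$, the closeness hypothesis gives $r_t^{(i)}=1+\log r_t^{(i)}+O(\epsilon^2)$. Substituting $\log r_t^{(i)}=\log\pi_\theta(\sigma_t^{(i)}|q,s^{(i)}_{<t})-\log\pi_{\theta_{old}}(\sigma_t^{(i)}|q,s^{(i)}_{<t})$ yields
\[
r_t^{(i)}\hat{A}_i=\hat{A}_i\,\log\pi_\theta(\sigma_t^{(i)}|q,s^{(i)}_{<t})+\hat{A}_i\bigl(1-\log\pi_{\theta_{old}}(\sigma_t^{(i)}|q,s^{(i)}_{<t})\bigr)+O(\epsilon^2).
\]
The second summand depends only on $\pi_{\theta_{old}}$ and $\hat{A}_i$, both independent of $\theta$; its aggregated expectation is precisely the constant $C$ (up to sign, which is immaterial since $C$ is an arbitrary $\theta$-independent constant). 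I would then reconcile the sampling measures: the definition of $\mathcal{L}_{RPO-1}$ draws $\mathcal{G}_q\sim\pi_\theta$, whereas the collapsed term draws from $\pi_{\theta_{old}}$, but under the stated assumption $\mathcal{G}\sim\pi_{\theta_{old}}$ together with $\pi_\theta/\pi_{\theta_{old}}\to1$ the two expectations agree to the same order, so the leading piece $\mathbb{E}_{Q,\pi_{\theta_{old}}}\frac1G\sum_i\frac1T\sum_t\hat{A}_i\log\pi_\theta(\sigma_t^{(i)}|q,s^{(i)}_{<t})$ is exactly $\mathcal{L}_{RPO-1}$. Collecting the three steps gives the claimed identity with an $O(\epsilon^2)$ remainder absorbed into the ``$\approx$''.

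The main obstacle I expect is not the clip removal or the Taylor step, both routine, but justifying the swap of sampling distributions cleanly: one must argue that replacing the on-policy expectation $\mathcal{G}_q\sim\pi_\theta$ in $\mathcal{L}_{RPO-1}$ with the behavior-policy expectation $\mathcal{G}_q\sim\pi_{\theta_{old}}$ perturbs the objective only at order $O(\epsilon^2)$. This holds because the change-of-measure factor is itself $r_t^{(i)}=1+O(\epsilon)$ multiplying a controlled term $\hat{A}_i\log\pi_\theta$, so the discrepancy is second order; I would make this precise and verify that every neglected remainder is genuinely $O(\epsilon^2)$ uniformly over the trajectory index $i$ and round $t$, so that the errors survive the $\frac1G\sum_i\frac1T\sum_t$ averaging without accumulating into a first-order contribution.
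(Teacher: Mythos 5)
Your proposal is correct and is in substance the same argument as the paper's: both rest on the clip/min collapse under $|\pi_\theta/\pi_{\theta_{old}}-1|<\epsilon$, the first-order Taylor identification of $\pi_\theta/\pi_{\theta_{old}}-1$ with $\log(\pi_\theta/\pi_{\theta_{old}})$, and the absorption of all $\theta$-independent terms into the constant $C$. The only structural difference is direction: the paper starts from $\mathcal{L}_{RPO-1}$, splits $\log\pi_\theta=\log\pi_{\theta_{old}}+\log(\pi_\theta/\pi_{\theta_{old}})$ and expands the logarithm, whereas you start from the GRPO surrogate, disable the clip, and expand the ratio; these are the same computation read in opposite directions. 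One caveat concerns your final step: the paper never performs the change of sampling measure you devote your last paragraph to, because it reads the hypothesis $\mathcal{G}\sim\pi_{\theta_{old}}$ as a stipulation that $\mathcal{L}_{RPO-1}$ is evaluated with trajectories drawn from $\pi_{\theta_{old}}$ (its proof writes $\mathbb{E}_{Q,\pi_{\theta_{old}}}$ from the very first line), so no swap is needed. Moreover, your claim that the swap costs only $O(\epsilon^2)$ is not right as stated: the discrepancy is $\mathbb{E}_{\pi_{\theta_{old}}}\bigl[(r_t^{(i)}-1)\,\hat{A}_i\log\pi_\theta\bigr]$, and since $\hat{A}_i\log\pi_\theta$ is only $O(1)$ (it is not itself small in $\epsilon$), this term is $O(\epsilon)$, not $O(\epsilon^2)$. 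This does not sink your proof --- an $O(\epsilon)$ error still vanishes in the stated limit $\pi_\theta/\pi_{\theta_{old}}\to 1$ and can be absorbed into the ``$\approx$'' --- but the uniform second-order bookkeeping you promise is not achievable, and under the paper's reading of the hypothesis it is also unnecessary.
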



In summary, the proposed policy $\pi_\theta(\cdot)$ for ranking can not only operate in a SD manner, as demonstrated by Theorem 1,  but also can be optimized in the way of the famous GRPO strategy. Therefore, the proposed RL framework satisfies the latency requirements of ranking systems by iteratively modifying the ranking under a constrained budget. This makes it well-suited for both agent training and serving within the up-to-down decoding paradigm.

\subsection{Reinforcement speculative decoding method in ranking systems}\label{sec:33}
This section introduces the implementation of our RSD method, illustrates the theoretical rationale underlying its design, and details the training strategy and time complexity analysis of RSD.

\subsubsection{Relevance network $h_\theta(\cdot)$ for deterministic scores}
This section centers our relevance network design on the following question: \textit{How can we alleviate the drawbacks of existing SD methods — not only do they predict for multiple independent tokens, but also they tend to discard listwise ranking knowledge of unaccepted items within each round?}

Intuitively, the LLM encoding of the predicted ranking generated by the agent contributes to not only verifying the accepted items but also revealing listwise relationships among different candidate items. Specifically, the LLM encoding {\small$S_{t} = P_\text{llm}(\sigma_t|x^{0}_{q}) \in \mathbb{R}^{K \times K}$} represents the probability of selecting the next item based on given ranked items. On the one hand, {\small$S_{t}[m,:]$} denotes the probability of selecting the next item with the prefix ranking {\small$\sigma_t[:m\!-\!1]$}. This indicates how rankings of candidate items mutually affect each other in a probabilistic way, helping to capture token-level dependency instead of the independent multi-token prediction assumption in existing SD methods. On the other hand, {\small$\sigma_0\!\to\! S_0\to\!\cdots\!\to\! \sigma_T$} contains rich knowledge about listwise ranking of all items verified by LLMs across different rounds in RL, helping to capture dependency in round-level item rankings for agent policy enhancement. To this end, we propose to model the sequential patterns (i.e., dependency) of listwise rankings at both the token level (inter-$S_t$) and the round level (across {\small$S_1, \cdots, S_t$)},

\vspace{-15pt}
{\small\begin{align*}
    h_\theta(\mathcal{D}|q,s_{<t})=Softmax(\text{Mean}(Z_{t-1}))\in\mathbb R^K, \quad Z_0,\cdots,Z_{t-1} = \text{Transformer}([S_{0},\cdots,S_{t-1}];\theta).
\end{align*}}\noindent where {\small$Z_j\in \mathbb R^{K\times K}$ and $Mean(\cdot)$} denotes average pooling for all items' encoding by the Transformer model. Therefore, the relevance network produces deterministic scores for all items {\small $h_\theta(\mathcal{D} \mid q, s_{<t})$}, indicating that items with higher scores are more likely to be ranked at the top.

\subsubsection{Selection for advantage $\hat{A}_i$ and reference model $\pi_{\theta_{ref}}(\cdot)$}\label{sec:advantage}


To measure the gap between predicted ranking $\sigma_T$ and the LLM's target ranking $\sigma_{tr}$, we formulate the final return by the Spearman distance (detailed in \autoref{ranking_measure}).
With the return $R_i$ for each the trajectory {\small$\sigma^{(i)}$}, we have two strategies to implement the advantage in group {\small$\mathcal G$}, namely group average advantage {\small$\hat{A}_i^\text{group}$} and reference advantage {\small$\hat{A}_i^\text{ref}$}, i.e.,

\vspace{-10pt}
{\small\begin{equation*}
    \hat{A}_i^\text{group} = R_i - \hat{\mu}_{-i}; \quad \hat{\mu}_{-i}=\frac{1}{G-1}\sum\nolimits_{j\neq i}  R_j  \quad and \quad \hat{A}_i^\text{ref} = R_i - R_\text{ref}; 
\end{equation*}}

\begin{theorem}
By  assigning $\pi_{\theta_{ref}} \gets \pi_{\theta}$ in the training process, we have both updating the gradient in $\mathcal{L}_{RPO-1}$ w.r.t. group average advantage $\hat{A}_i^\text{group}$ and reference advantage $\hat{A}_i^\text{ref}$ is unbiased, i.e.,
{\small\begin{equation*}
 \mathbb E[(R_i-B)\cdot \nabla_{\theta_{d}}\log\pi_{\theta}(\sigma_t|q,s_{<t})]=\mathbb E[R\cdot\nabla_{\theta_{d}}\log\pi_{\theta}(\sigma_t|q,s_{<t})]
\end{equation*}}\noindent  where $B=\hat{\mu}_{-i}$ or $B = R_\text{ref}$. Suppose the return of {\small$R_i= \mu + \varepsilon_i$ and $R_{\text{ref}} = \mu + \delta$} can be decomposed into query-level reward (related to the difficulty of the question), i.e., {\small$\mu\sim \mathcal N\!(0,\sigma_b^{2}\bigr)$}, and  temperature noise (related to the random sampling), i.e., {\small$\delta\sim \mathcal N\!(0,\sigma_\delta^{2}\bigr)$} and {\small$\varepsilon_i \stackrel{\text{i.i.d.}}{\sim} \mathcal N\!(0,\sigma_w^{2}\bigr)$} with {\small$\varepsilon_i\perp\delta$}, we have their variance 

\vspace{-10pt}
{\small\begin{equation*}
    Var[(R-R_\text{ref})\cdot \nabla_{\theta_{d}}\log\pi_{\theta}(\sigma_t|q,s_{<t})] < Var[(R-\hat{\mu}_{-i})\cdot \nabla_{\theta_{d}}\log\pi_{\theta}(\sigma_t|q,s_{<t})]
\end{equation*}}
 if and only if  {\small$\sigma_\delta^{2} < {\sigma_w^{2} }/{(G-1)}$}.
\end{theorem}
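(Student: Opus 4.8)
The plan is to establish this theorem in two logically separate pieces, matching the two claims in the statement: first the \emph{unbiasedness} of the baseline-subtracted gradient estimator, and then the \emph{variance comparison} between the reference baseline $R_\text{ref}$ and the leave-one-out group baseline $\hat\mu_{-i}$. For the unbiasedness claim, I would use the standard score-function (REINFORCE) identity. The key observation is that for any baseline $B$ that is independent of the action $\sigma_t$ sampled from $\pi_\theta(\cdot\mid q,s_{<t})$, we have $\mathbb{E}[B\cdot\nabla_{\theta_d}\log\pi_\theta(\sigma_t\mid q,s_{<t})]=B\cdot\sum_{\sigma_t}\pi_\theta(\sigma_t)\nabla_{\theta_d}\log\pi_\theta(\sigma_t) = B\cdot\nabla_{\theta_d}\sum_{\sigma_t}\pi_\theta(\sigma_t)=B\cdot\nabla_{\theta_d}1=0$. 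Hence subtracting $B$ changes nothing in expectation, giving $\mathbb{E}[(R_i-B)\nabla_{\theta_d}\log\pi_\theta]=\mathbb{E}[R\cdot\nabla_{\theta_d}\log\pi_\theta]$. The only subtlety is verifying that both candidate baselines qualify as action-independent: $R_\text{ref}$ is generated by the frozen reference rollout and $\hat\mu_{-i}$ is the average over the \emph{other} $G-1$ trajectories, so under the stated setup both are independent of the $i$-th sampled action, which is exactly why the assignment $\pi_{\theta_{ref}}\gets\pi_\theta$ is needed to keep the $R_\text{ref}$ baseline valid.

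For the variance comparison, I would first factor out the common score term. Writing $g=\nabla_{\theta_d}\log\pi_\theta(\sigma_t\mid q,s_{<t})$, both estimators share the multiplicative factor $g$, so the comparison reduces to comparing $\mathrm{Var}[(R-R_\text{ref})g]$ against $\mathrm{Var}[(R-\hat\mu_{-i})g]$, and under the independence and zero-mean modeling assumptions the dominant difference is driven by the variance of the scalar coefficients $R-R_\text{ref}$ versus $R-\hat\mu_{-i}$. Using the decomposition $R_i=\mu+\varepsilon_i$, $R_\text{ref}=\mu+\delta$, the shared query-level term $\mu$ cancels in both differences: $R_i-R_\text{ref}=\varepsilon_i-\delta$ and $R_i-\hat\mu_{-i}=\varepsilon_i-\frac{1}{G-1}\sum_{j\neq i}\varepsilon_j$. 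I would then compute the two variances directly. For the reference case, $\mathrm{Var}[\varepsilon_i-\delta]=\sigma_w^2+\sigma_\delta^2$ by the independence $\varepsilon_i\perp\delta$. For the group case, $\mathrm{Var}[\varepsilon_i-\frac{1}{G-1}\sum_{j\neq i}\varepsilon_j]=\sigma_w^2+\frac{1}{(G-1)^2}\cdot(G-1)\sigma_w^2=\sigma_w^2+\frac{\sigma_w^2}{G-1}$, again using the i.i.d.\ property of the $\varepsilon_j$. Setting the reference variance strictly below the group variance gives $\sigma_w^2+\sigma_\delta^2<\sigma_w^2+\frac{\sigma_w^2}{G-1}$, which simplifies exactly to the claimed iff condition $\sigma_\delta^2<\sigma_w^2/(G-1)$.

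The bridge step that needs care is passing from the scalar-coefficient variance inequality to the full vector-weighted inequality $\mathrm{Var}[(R-R_\text{ref})g]<\mathrm{Var}[(R-\hat\mu_{-i})g]$, because $g$ is itself random and may correlate with the coefficients. I would handle this by noting that under the stated model the coefficient noise ($\varepsilon_i,\delta$) is taken independent of the realized score $g$ (the baselines do not alter the sampling distribution that produces $g$), so the second moment factorizes as $\mathbb{E}[(R-B)^2]\,\mathbb{E}[g^2]$ while the first moment vanishes by the unbiasedness already proven; hence $\mathrm{Var}[(R-B)g]=\mathbb{E}[(R-B)^2]\,\mathbb{E}[\|g\|^2]$ up to the shared positive factor $\mathbb{E}[\|g\|^2]$, and the inequality is governed entirely by the scalar variances computed above. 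I expect this factorization—justifying that the score term contributes a common nonnegative multiplier and therefore preserves the direction of the inequality—to be the main obstacle, since it is the one place where the argument relies on an (implicit) independence between the return noise and the policy-gradient score rather than on a purely algebraic identity; making that assumption explicit is what licenses reducing the whole claim to the clean $\sigma_\delta^2<\sigma_w^2/(G-1)$ threshold.
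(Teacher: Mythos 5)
Your proposal is correct and takes essentially the same route as the paper's proof: unbiasedness via independence of each baseline from the score together with the zero-mean score-function identity, and the variance comparison by factoring the gradient variance into the score's second moment times the scalar advantage variance, cancelling the query-level term $\mu$, and computing $\sigma_w^2+\sigma_\delta^2$ versus $\sigma_w^2 + \sigma_w^2/(G-1)$ to get the iff threshold. The only cosmetic difference is at your "bridge" step: the paper does not claim the first moment $\mathbb{E}[(R-B)g]$ vanishes (unbiasedness alone gives only $\mathbb{E}[(R-B)g]=\mathbb{E}[Rg]$), but instead keeps $(\mathbb{E}[gR])^2$ as a common subtracted term in both variances so it cancels in the comparison — which reaches the same conclusion while relying on the same implicit independence assumption you correctly flagged.
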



This theorem proves the \emph{unbiasedness} and characterizes the \emph{variance} of our gradient estimator when using either the \textbf{group-average} baseline $\hat{\mu}_{-i}$ or the \textbf{reference} baseline $R_{\text{ref}}$.
Specifically, both choices yield unbiased gradient estimation. Moreover, Theorem~3 shows that the reference baseline achieves strictly lower gradient variance whenever $\sigma_\delta^{2} < {\sigma_w^{2}} / (G-1)$,
where $\sigma_w^{2}$ denotes the temperature noise of trajectory rewards,
and $\sigma_\delta^{2}$ represents the variance of the reference reward.
Therefore, we adopt the reference model with \textbf{greedy decoding}, i.e., $\sigma_t^{\text{ref}} \gets \max_{\sigma_t} \pi_\theta(\sigma_t \mid q, s_{<t})$ for $t = 1, \cdots, T$,
which yields $\sigma_\delta^{2} \approx 0$, thereby automatically satisfying the condition for any group size $G > 1$.
As a result, the greedy reference model with advantage $\hat{A}_i^\text{ref}$ enables more stable updates during RPO training.

\subsubsection{Training strategy and time complexity analysis.}

We implement the training of the RSD method into two stages: 1) initialize the policy network by supervised learning; 2) fine-tune the policy network through RPO in RL. We detail the pseudo-algorithm of the RSD method in Algorithm \autoref{alg:rsd}.
\begin{algorithm}[H]
\caption{Pseudo algorithm of RSD method}
\label{alg:rsd}
\begin{algorithmic}[1]
{\small
\Require Queries $Q$, Candidate items $\mathcal{D}$, Target LLM $P_\text{llm}$, Budget $T$, Group size $G$

\State \textbf{// Stage I: Supervised initialization}
\For{$q \in Q$}
   \State Initialize ranking $\sigma_{init}$ and compute LLM prob $S_{init}$.
   \State Update policy network based on supervised learning.
\EndFor

\State \textbf{// Stage II: Ranking-tailored policy optimization}
\For{$q \in Q$}
    \State Compute reference ranking $\sigma_t^{\text{ref}}$ and reward $R_{\text{ref}}$
    \For{$i = 1$ to $G$}
        \For{$t = 1$ to $T$}
            \State Sample ranking $\sigma_t^{(i)} \sim \pi_\theta(\cdot | q, s^{(i)}_{<t})$
            \State Compute logits $S_t^{(i)} = P_\text{llm}(\sigma_t^{(i)} | x^{0}_q)$
            \State Update $s^{(i)}_{<t+1} \gets s^{(i)}_{<t} \cup S_t^{(i)}$
        \EndFor
        \State Compute return $R^{(i)}$ and advantage $\hat{A}^\text{ref}_i$
    \EndFor
    \State Update policy network according to $\mathcal L_{RPO}$.
\EndFor
}
\end{algorithmic}
\end{algorithm}

\textbf{Stage~I: Supervised Initialization}. To provide a strong and warm start for the subsequent RPO fine-tuning, we first train the policy network {\small$\pi_{\theta}(\cdot)$} with plain supervised learning. Specifically, we initialize the ranking {\small$\sigma_{init}$} based on the probability {\small$P_\text{llm}(d|x^{0}_q)$} on candidate items $\mathcal D_q$, and compute the LLM probability {\small$S_{init} = P_\text{llm}(\sigma_{init}| x^{0}_q)$}. Then, we train the policy network based on supervised learning, i.e., {\small$\max_\theta\ln\pi_\theta(\sigma_{tr}|q,S_{init})$}, where {\small$\sigma_{tr}$} denotes the target ranking. 

\textbf{Stage~II: Ranking-tailored Policy Optimization.} With the 
warm-start, we further fine-tune the policy network by RL so that it can \emph{actively} explore optimal policy and utilize listwise ranking knowledge under a fixed call budget, including trajectory sampling, return and advantage calculation, ranking-tailored policy optimization.

\textbf{Inference complexity analysis}. Suppose we have $M$ tokens in the prompt and $K$ candidate items for ranking, the decoding complexity of our  RSD approximates {\small$\mathcal{O}(T\cdot(M+K)^2\cdot o)$}, where $o$ denotes the overall of dimension of the target LLM. It is lower than auto-regressive decoding with complexity {\small$\mathcal{O}([M^2+\sum_{k=1}^{K}(K+k)^2]\cdot o)$} as {\small$T \ll K$}, and existing SD with complexity {\small$\mathcal{O}(T_{sp}\cdot(M+K)^2\cdot o)$} when {\small$T < T_{sp}$}, where $k$ denotes indicator for $k$-th token generation and $T_{sp}$ denotes the required times of LLM encoding for SD methods. With the KV cache mechanism, the incremental complexity of RSD for decoding is {\small$\mathcal{O}(T\cdot(M+K)\cdot o)$}, which is also lower than auto-regressive decoding with complexity {\small$\mathcal{O}(\sum_{k=1}^{K}(K+k)\cdot o)$} and existing SD with complexity {\small$\mathcal{O}(T_{sp}\cdot(M+K)\cdot o)$} when {\small$T < T_{sp}$}. In summary, our RSD not only shows lower complexity compared to auto-regressive decoding, but also shows controllable complexity compared to SD methods.

\section{Experiments}
In this section, we aim to validate the effectiveness of the proposed method RSD. Specifically, we conduct extensive experiments in both IR and RS tasks to study the following research questions:  \textbf{RQ1}: Whether the proposed RSD outperforms state-of-the-art STD methods and SD methods? \textbf{RQ2}: Whether the proposed RSD benefits from the proposed module, such as RPO, listwise ranking modeling, proposed advantage function? \textbf{RQ3}: Whether the proposed RSD method trained on specific data and backbone can be used for others (e.g., different LLM backbones, datasets)?  
\textbf{RQ4}: How do hyperparameters influence the performance of the proposed RSD?

\renewcommand{\arraystretch}{1}
\begin{table*}
  \centering
  \fontsize{6}{9}\selectfont
  \caption{{\small Performance of different methods on \textbf{IR tasks and RS tasks} with different LLM backbones.  * indicates statistically significant improvement of RSD to baselines on t-test (p < 0.05).}}
  \label{tab:RSComparison}
\begin{tabular}{|c|c|c|c||c|c|c|c|c|c|c|c|c|}
\midrule\multicolumn{2}{|c|}{{\fontsize{6pt}{6pt}\selectfont Task/LLM}}                                      & Dataset                       & {\fontsize{5pt}{6pt}\selectfont Method}   & SDM    & {\fontsize{6pt}{6pt}\selectfont Blockwise} & Medusa & PDM    & {\fontsize{6pt}{6pt}\selectfont Rankzephyr} & FIRST  & {\fontsize{5pt}{6pt}\selectfont FIRST+SD} & RSD            & Imprv.  \\\midrule\midrule
\multirow{8}{*}{\rotatebox{90}{IR task}} & \multirow{8}{*}{\rotatebox{90}{Llama-3.2-3B}} & \multirow{4}{*}{\begin{tabular}[c]{@{}c@{}}{\fontsize{5pt}{6pt}\selectfont MS}\\ {\fontsize{5pt}{6pt}\selectfont MARCO}\end{tabular}}   
                & KT$\uparrow$       & 0.4962 &0.5801    & 0.5462 & 0.5436 & 0.2693     & 0.3681 & 0.4047     & \textbf{0.7169*} & 23.58\% \\
&            &   & SR$\uparrow$        & 0.6196 & 0.7027    & 0.6746 & 0.6665 & 0.3654     & 0.4836 & 0.5154     & \textbf{0.8371*} &19.13\%  \\
&            &   & FD$\downarrow$       & 71.084 & 59.616  & 63.072 & 99.956 & 101.07    & 87.800 & 80.600     & \textbf{40.964*} & 31.29\% \\
&            &   & KD$\downarrow$      & 47.858 & 39.890   & 43.114 & 68.460 & 69.414     & 60.028 & 56.554     & \textbf{26.896*} & 32.57\% \\\cline{3-13}
&            & \multirow{4}{*}{Quora}        & KT$\uparrow$      & 0.4267 & 0.5829    & 0.5461 & 0.5393 & 0.2399     & 0.3684 & 0.4737     & \textbf{0.6454*} & 10.7\% \\
&            &   & SR$\uparrow$       & 0.5514 & 0.7079    & 0.6793 & 0.6625 & 0.3402     & 0.4973 & 0.6015     & \textbf{0.7824*} & 10.5\% \\
&            &   & FD$\downarrow$     & 123.97 & 91.736    & 98.212 & 100.95 & 161.72     & 135.92 & 111.67     & \textbf{78.808*} & 14.1\% \\
&            &   & KD$\downarrow$      & 85.999 & 62.566    & 68.088 & 69.100 & 114.02     & 94.740 & 78.940     & \textbf{53.186*} & 15.0\% \\\midrule\midrule
\multirow{8}{*}{\rotatebox{90}{RS task}} & \multirow{8}{*}{\rotatebox{90}{Qwen2.5-7B}}   & \multirow{4}{*}{{\fontsize{5.5pt}{6pt}\selectfont ML-1M}}        & KT$\uparrow$       & 0.4841 & 0.5745    & 0.5481 & 0.5425 & 0.1947     & 0.0915 & 0.2561     & \textbf{0.6426*} & 11.9\% \\
&            &   & SR$\uparrow$       & 0.6157 & 0.7086    & 0.6852 & 0.6647 & 0.2816     & 0.1401 & 0.3356     & \textbf{0.7659*} & 8.09\%  \\
&            &   & FD$\downarrow$     & 111.99 & 94.512    & 98.970 & 102.25 & 166.77     & 186.18 & 154.20     & \textbf{78.568*} & 16.9\% \\
&            &   & KD$\downarrow$     & 77.390 & 63.828    & 67.782 & 68.632 & 120.80     & 136.27 & 111.59     & \textbf{53.614*} & 16.0\% \\\cline{3-13}
&            & \multirow{4}{*}{\begin{tabular}[c]{@{}c@{}}{\fontsize{5.5pt}{6pt}\selectfont Amazon}\\ {\fontsize{5.5pt}{6pt}\selectfont -Games}\end{tabular}} & KT$\uparrow$        & 0.4726 & 0.5795    & 0.4924 & 0.5106 & 0.0888     & 0.0850 & 0.0342     & \textbf{0.6404*} & 10.5\% \\ 
&            &   & SR$\uparrow$     & 0.6104 & 0.7143    & 0.6283 & 0.6343 & 0.1256     & 0.1213 & 0.0459     & \textbf{0.7716*} & 8.02\%  \\
&            &   & FD$\downarrow$     & 115.09 & 92.732    & 109.98 & 108.11 & 193.84     & 194.32 & 194.69     & \textbf{80.376*} & 13.3\% \\
&            &   & KD$\downarrow$      & 79.113 & 63.078    & 76.144 & 73.414 & 136.68     & 137.25 & 144.87     & \textbf{53.946*} & 14.5\%\\\midrule\midrule
\end{tabular}
\vspace{-15pt}
\end{table*}

\subsection{Experimental Setup}
\textbf{Tasks and Datasets.} In this paper, we validate the effectiveness of the proposed method RSD on two representative tasks in ranking systems \cite{liu2022generalized}, namely the IR task and the RS task. The goal of the IR task is to return $K$ passages that are most relevant to a query $q$. For IR tasks, we adopt the MS MARCO dataset\footnote{{\small\url{https://huggingface.co/datasets/rryisthebest/rank_zephyr_training_data_alpha}}} provided by \cite{reddy2024first} and retain examples with 20 candidate passages that need to be reranked, resulting in a total of $9,978$ examples. In addition, we adopt the Quora dataset\footnote{{\small\url{https://public.ukp.informatik.tu-darmstadt.de/thakur/BEIR/datasets/quora.zip}}} collected from Quora.com and randomly select $5,000$ queries, each associated with $25$ candidate passages that are most relevant to the query in the corpus based on the BM25 model. The goal of the RS task is to provide users with items they may prefer. For RS tasks, we adopt the ML-1M dataset\footnote{{\small\url{https://grouplens.org/datasets/movielens/}}} collected by the MovieLens website with $6,040$ users, and the Amazon-Game dataset\footnote{{\small\url{https://jmcauley.ucsd.edu/data/amazon/}}} collected from Amazon.com with $9,512$ users. For each user in these two datasets, we keep $95\%$ of their interactions as historical behaviors and use the remaining interactions to construct candidate items. Specifically, we construct $25$ candidate items for each user by randomly selecting from the user's remaining interactions, potentially preferred items based on user-KNN methods, and popular items.

\textbf{Backbone models and baselines.}
In this paper, we select two representative LLM backbones with different sizes and series, namely Llama-3.2-3B(-Instruct) \cite{grattafiori2024llama} and Qwen2.5-7B(-Instruct) \cite{yang2024qwen2}. For baseline methods, we take the following state-of-the-art methods with STD strategy, i.e., target LLM with STD, Rankzephyr, FIRST, and methods with SD strategy, i.e., Speculative Decoding Method (SDM) \cite{leviathan2023fast}, Blockwise \cite{stern2018blockwise}, Medusa \cite{cai2024medusa}, and Parallel Decoding Method (PDM) \cite{santilli2023accelerating}, and hybird method with combining FIRST and SD strategy, i.e., FIRST + Speculative Decoding. As there is significant difference between our task and those of these methods, necessary modifications need to be adopted for fair comparison, which is detailed in \autoref{sec_baseline}.

\textbf{Evaluation Protocol, Metrics, and Implementation Details.}
For each dataset, we reserve the last 500 queries for the validation set and the preceding 500 queries for the test set, while the remaining queries are used for training. During the evaluation phase, we adopt four metrics \cite{murphy2012machine} to measure the similarity between the predicted ranking and the target ranking, namely Kendall’s Tau (KT), Spearman’s Rho (SR), Footrule Distance (FD), and Kemeny Distance (KD). Experimental results are reported as the average over five runs. For implementation details, we fix the budget $T=5$ for our method and the SD methods to ensure fair comparison. We set the learning rate to $5\mathrm{e}{-5}$ using the Adam optimizer and a batch size of $16$ for all methods. For our method, we adopt a single-layer Transformer with $5$ heads and a hidden dimension of $25$. We set the coefficient of the KL loss from the reference model to $0.1$. For baseline methods, we use the parameter settings provided by the authors when available; otherwise, we tune them for optimal performance under the constrained budget and fair computational complexity. All experiments were conducted on a unified setup featuring an \textit{AMD EPYC  7543} 32-core processor and four \textit{NVIDIA A6000} 48GB GPUs. The source code is available when the paper is accepted. 

\subsection{Experiment results and analysis}
\textbf{Model Comparison (RQ1).}
We investigate whether the proposed RSD outperforms state-of-the-art methods on IR tasks and RS tasks.
\autoref{tab:RSComparison} shows the performance of different methods, including STD based methods and SD-based models on IR and RS tasks with different LLM backbones, where remaining experimental results are shown in \autoref{appendix_table1} and \autoref{appendix_table2}. From the experimental results, we obtain the following conclusions: Firstly, the proposed method RSD significantly outperforms all baseline methods in all cases, which shows the effectiveness of the proposed method. RSD demonstrates notable enhancements, with average improvements of 19.61\% and 12.39\% when compared to the top-performing baseline models on the IR task and RS task, respectively. Secondly, the SD-based methods (e.g., SDM, Blockwise, Medusa, and PDM) outperform the STD based methods (e.g., Rankzephyr and FIRST), showing increasing budgets for draft-then-verification contributing to aligning predicted target ranking. Although we can combine the FIRST and SD method (i.e., FIRST+SD), limited improvement is achieved due to the distribution drift of FIRST in terms of the target LLM model. Thirdly, the methods with trainable heads (e.g., Blockwise and Medusa) achieve better performance than the methods without trainable heads (e.g., SDM and PDM) in most cases, indicating the necessity of bridging between LLM encoding (e.g., logits and last layer embedding) and draft model for in ranking tasks. Finally, fine-tuning LLMs (e.g., Rankzephyr and FIRST) even leads to worse performance in Qwen2.5-7B and RS task, showing that adopting SFT and learn-to-rank loss \cite{burges2005learning} is not a generalized and tailored solution for LLM inference approximation.

\begin{wraptable}{r}{0.45\textwidth}
  \centering
  \fontsize{7}{9}\selectfont
      \vspace{-15pt}
  \caption{{\small Ablation studies, where the complete experimental results are show in \autoref{appendix_ablation}}}
  \label{tab:ablation}
\begin{tabular}{|c|c|c|c|c|c|}\midrule
\multirow{2}{*}{LLM}                      & Dataset & \multicolumn{2}{c|}{MS MARCO}           & \multicolumn{2}{c|}{Amazon-Games} \\\cline{2-6}
  & Metrics  & KT$\uparrow$              & FD$\downarrow$              & KT$\uparrow$               & FD$\downarrow$                          \\\midrule
\multirow{6}{*}{\rotatebox{90}{Llama-3.2-3B}} & STD     & 0.3042          & 96.26                & 0.0889          & 190.6          \\
& GSD     & 0.6360          & 49.52           & 0.5839          & 88.20          \\
& w/o RPO & 0.6797          & 46.05                & 0.6766          & 70.16          \\
& w/o  LRK & 0.6065          & 55.10                & 0.6479          & 76.20          \\
& w/o RA  & 0.6907          & 44.80                 & 0.6880          & 67.99          \\
   & RSD   & \textbf{0.7169}          & \textbf{40.96} & \textbf{0.6901} & \textbf{68.38} \\\midrule
\multirow{6}{*}{\rotatebox{90}{Qwen2.5-7B}} & STD     & 0.3529          & 88.04            & 0.2328          & 160.4          \\
& GSD     & 0.6337          & 50.10                & 0.5787          & 88.81          \\
& w/o RPO & 0.6823          & 45.42               & 0.6571          & 76.36          \\
& w/o  LRK & 0.5343          & 64.00               & 0.5551          & 97.53          \\
& w/o RA  & 0.6789          & 45.84               & 0.6515          & 77.55          \\
& RSD   & \textbf{0.6996} & \textbf{43.41} & \textbf{0.6832} & \textbf{71.04}\\\midrule
\end{tabular}
    \vspace{-15pt}
\end{wraptable}

\textbf{Ablation studies (RQ2).} To validate the contributions of each critical component in our RSD method, we conducted comprehensive ablation studies. Specifically, we examined the following variants: \textbf{STD} (Single Token Decoding): A baseline variant that relies solely on single-token decoding of prompt encodings without SD. \textbf{GSD} (Greedy Speculative Decoding): A variant using speculative decoding guided purely by greedy token selection based on the logits from the encoding of the target LLM. \textbf{w/o RPO}: Removing the RPO stage, where the agent is trained only with supervised learning. \textbf{w/o LRK} (Listwise Ranking Knowledge): Removing modeling of listwise ranking knowledge among candidate items, i.e., only adopting the next-token logits of $\sigma_t[i_t^*]$ as features, and training an MLP head as the relevance network. \textbf{w/o RA} (Reference Advantage): Replacing the reference advantage {\small $\hat{A}^\text{ref}$} with the group-average advantage {\small $\hat{A}^\text{group}$} during RL training. \textbf{RSD}: The proposed method.

\begin{wrapfigure}{l}{0.35\linewidth}
    \vspace{-10pt}  
    \centering
    \includegraphics[width=1.0\linewidth]{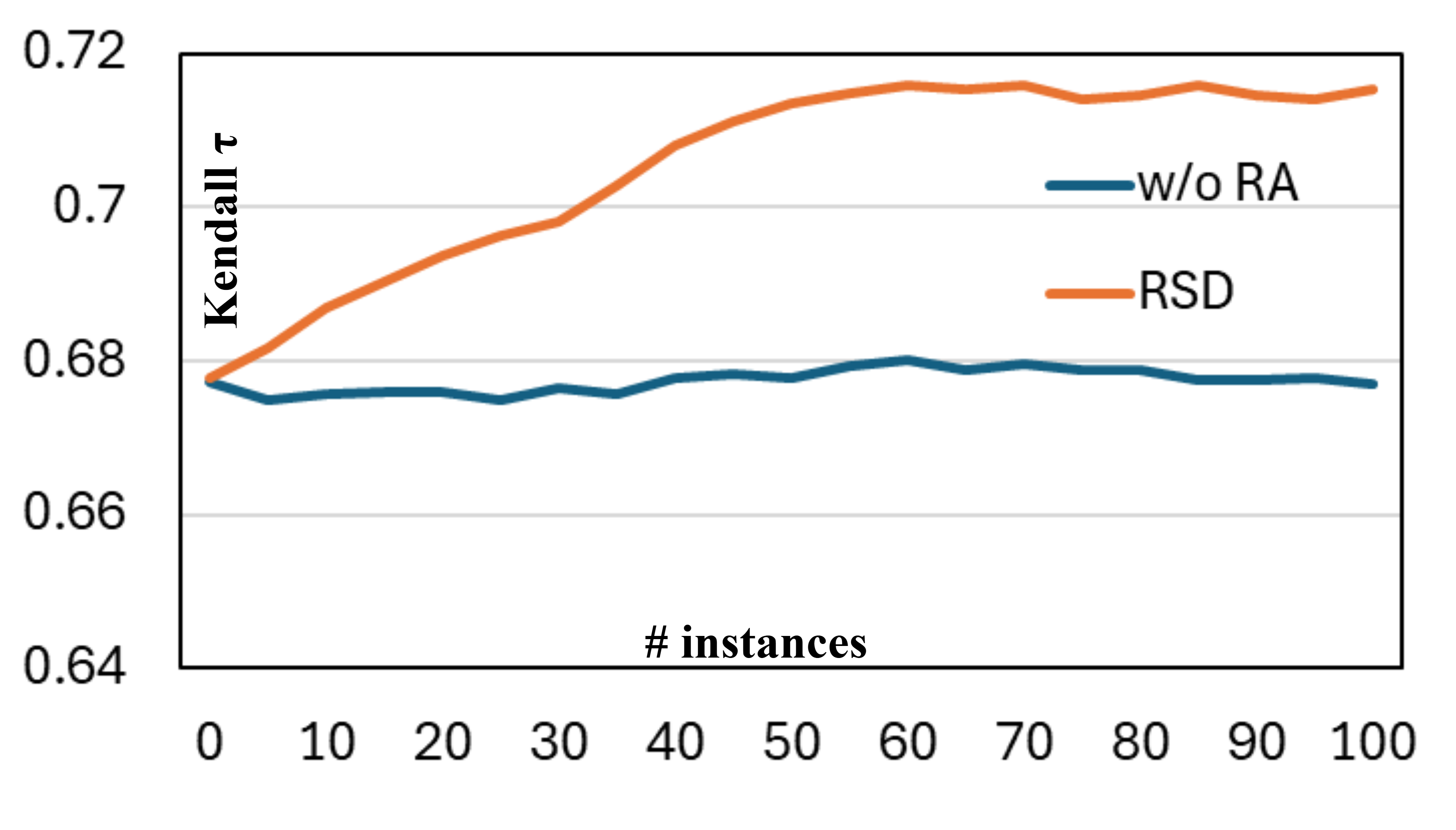}
    \vspace{-25pt}
    \caption{{\small Performance of RSD and w/o RA in training process.}}
    \label{pic_training}
    \vspace{-10pt}  
\end{wrapfigure}
\autoref{tab:ablation} reports the detailed results of these ablation experiments on both IR and RS tasks using Llama-3.2-3B and Qwen2.5-7B models. From these results, we draw the following observations:
First, the variant STD shows the worst performance among all methods, demonstrating the necessity of increasing the budget for better LLM decoding approximation. Although employing the self-drafting of the target LLM, as demonstrated by GSD, it is not a tailored approach for LLM decoding in ranking systems with increasing budgets.
Second, the variant w/o RPO shows the best performance among all methods except for RSD, indicating the effectiveness of modeling fine-grained listwise ranking for SD. Specifically, the variant w/o LRK performs poorly despite being equipped with the up-to-down decoding paradigm, illustrating the effectiveness of our Transformer-based agent in learning sequential (dependency) patterns of token- and round-level listwise rankings.
Third, the performance gap between w/o RPO and full RSD demonstrates the critical role of the RL fine-tuning stage. Without RPO, the model achieves suboptimal exploration of ranking modification strategies, resulting in less accurate approximations.
Finally, the variant "w/o RA" with group-average advantage shows performance degradation compared to the method RSD with reference advantage, underscoring that incorporating a reference model in advantage estimation effectively stabilizes and improves the RL training process. Specifically, \autoref{pic_training} shows the performance (KT) of RSD and the variant w/o RA varying with training batches on the MS MARCO dataset with Qwen2.5-7B, where each batch contains $G$ trajectories sampled based on one query. On the one hand, RSD with the reference advantage indicates stable improvement compared to the variant w/o RA, which shows marginal and unstable improvement. On the other hand, only a few (e.g., $<70$) instances are required in the RL phase, showing the good generalization ability of our RSD.

\begin{wrapfigure}{r}{0.5\linewidth}
    \centering
    \vspace{-10pt}
    \includegraphics[width=1\linewidth]{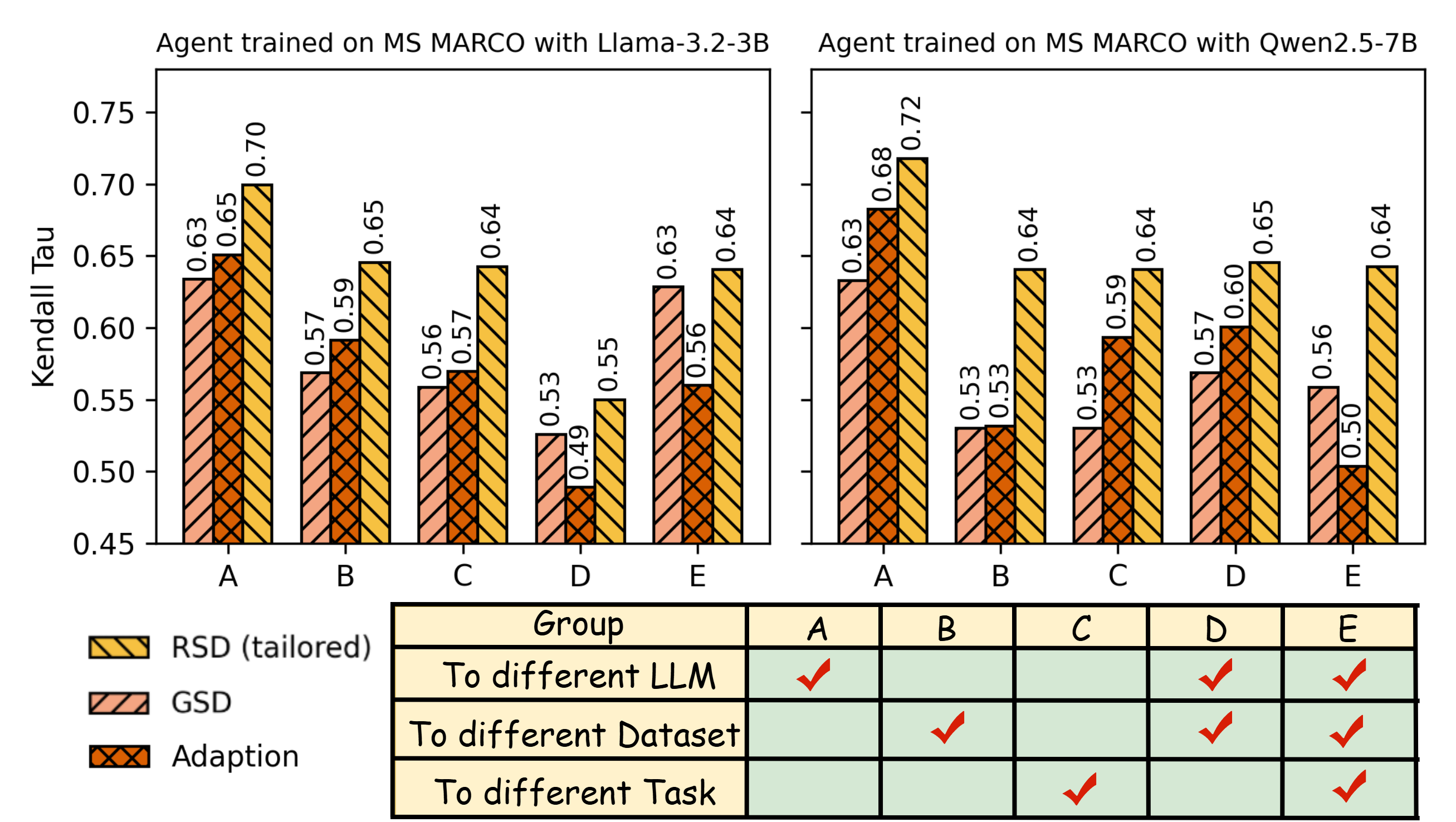}
    \vspace{-15pt}
    \caption{\small Investigate on adaptation ability of RSD.}
    \label{fig:adaption}
    \vspace{-5pt}  
\end{wrapfigure}
\textbf{Investigate on adaptation ability of the agent to different scenarios (RQ3).}
In this part, we aim to investigate the adaptation ability of RSD. Specifically, we train an agent based on specific data and a backbone, evaluating its adaptation on other LLM backbones, datasets, and tasks. \autoref{fig:adaption} shows the performance of the adapted agent (Adaption), the tailored agent (RSD), and GSD, where Adaption and GSD are not trained based on specific data and backbone, unlike RSD.
Firstly, Adaptation outperforms GSD in most cases of scenarios A, B, and C, indicating that there are some common patterns that can be effectively captured across backbones, datasets, and tasks in RL.
Secondly, Adaptation degrades compared to GSD in most cases of scenarios D and E, indicating the difficulty of adapting an agent to both different backbones and different tasks/datasets.
Thirdly, RSD shows the best performance in all cases, indicating that it is better to tailor the agent for real-world applications.

\begin{figure}[H]
    \centering
    \includegraphics[width=1.0\linewidth]{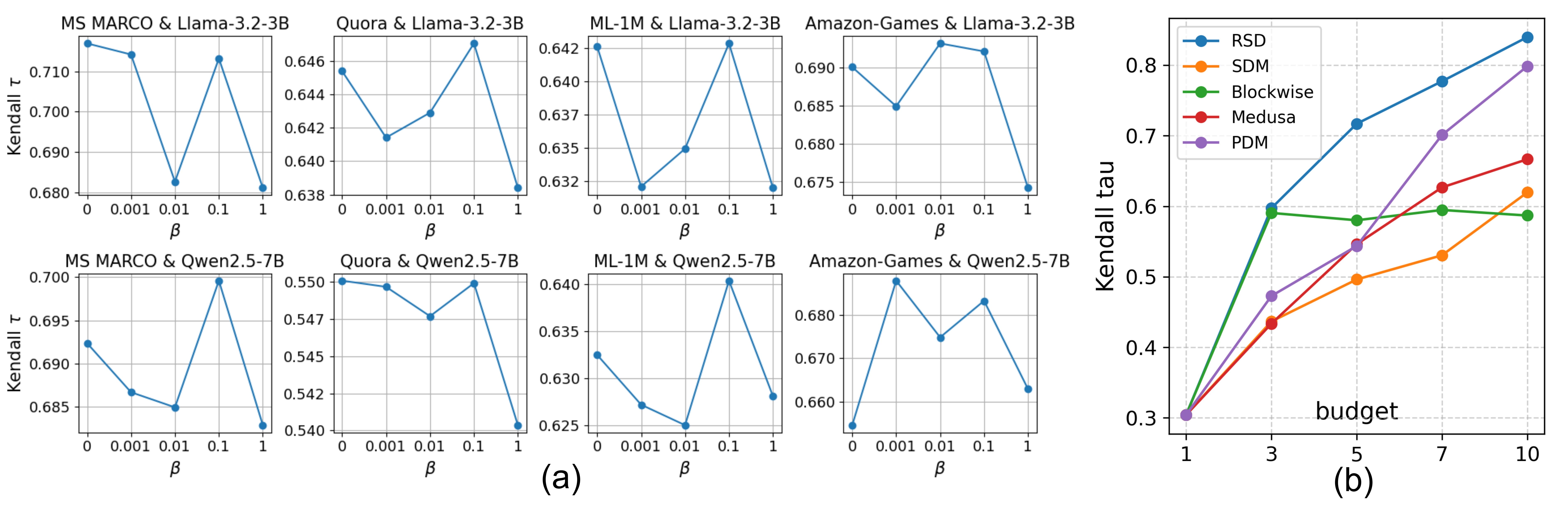}
    \vspace{-25pt}
    \caption{{\small Hyper-parameters analysis on (a) KL coefficient $\beta$ and (b) budget.}}
    \label{fig:hyper}
\end{figure}
\textbf{Hyper-parameter analysis (RQ4).} In this part, we investigate how hyperparameters influence the performance of the proposed RSD, including the KL coefficient $\beta$ and budget $T$. As depicted in \autoref{fig:hyper} (a), we observe that the highest performance is achieved when the KL coefficient $\beta=0.1$ in most cases. Therefore, we suggest implementing $\beta=0.1$ in real-world scenarios. We also investigate how the budget influences our method and others on the MS MARCO dataset with Llama-3.2-3B, as shown in \autoref{fig:hyper} (b). On the one hand, our RSD outperforms other baseline methods in all cases, especially showing considerable improvement with a limited budget. On the other hand, most methods improve with more budget in most cases. In real-world scenarios, the budget can be flexibly selected based on the needs of ranking systems.

\section{Conclusion and limitation}
\textbf{Conclusion}. In this paper, we introduce a Reinforcement Speculative Decoding method for fast LLM inference in ranking systems, featuring a multi-round modification method that operates without fine-tuning the target LLMs.
RSD adopts an up-to-down decoding paradigm that iteratively refines rankings within a constrained 
budget. Beyond empirical advancements, we provide a theoretical analysis of the principles and implementation of RSD. Experiments on IR and RS tasks demonstrate that our method significantly reduces latency while maintaining high-quality performance. 

\textbf{Limitation.} There are two main limitations in our RSD. Firstly, although RSD can be adapted to new LLMs or tasks, our experiments reveal that a policy trained on one backbone or domain may degrade when directly transferred to a very different setting. We leave exploring the better cross-domain generalization ability of RSD as future work, making it a \textbf{\textit{tuning-free}} plugin for all ranking tasks. 
Secondly, the up-to-down paradigm assumes a modest number of candidates (e.g., $K=25$). As $K$ grows, both the Transformer-based relevance network’s cost and the LLM encoding cost grow quadratically, potentially limiting RSD’s applicability to extremely large item corpora \cite{joglekar2020neural}.

\newpage
\bibliographystyle{plain}
\bibliography{neurips_2025}

\begin{thebibliography}{10}

\bibitem{burges2005learning}
Chris Burges, Tal Shaked, Erin Renshaw, Ari Lazier, Matt Deeds, Nicole
  Hamilton, and Greg Hullender.
\newblock Learning to rank using gradient descent.
\newblock In {\em Proceedings of the 22nd international conference on Machine
  learning}, pages 89--96, 2005.

\bibitem{cai2024medusa}
Tianle Cai, Yuhong Li, Zhengyang Geng, Hongwu Peng, Jason~D Lee, Deming Chen,
  and Tri Dao.
\newblock Medusa: Simple llm inference acceleration framework with multiple
  decoding heads.
\newblock In {\em Proceedings of the 41st International Conference on Machine
  Learning}, pages 5209--5235, 2024.

\bibitem{chen2023accelerating}
Charlie Chen, Sebastian Borgeaud, Geoffrey Irving, Jean-Baptiste Lespiau,
  Laurent Sifre, and John Jumper.
\newblock Accelerating large language model decoding with speculative sampling.
\newblock {\em arXiv preprint arXiv:2302.01318}, 2023.

\bibitem{chen2024sequoia}
Zhuoming Chen, Avner May, Ruslan Svirschevski, Yuhsun Huang, Max Ryabinin,
  Zhihao Jia, and Beidi Chen.
\newblock Sequoia: Scalable, robust, and hardware-aware speculative decoding.
\newblock {\em arXiv preprint arXiv:2402.12374}, 2024.

\bibitem{chen2024cascade}
Ziyi Chen, Xiaocong Yang, Jiacheng Lin, Chenkai Sun, Kevin Chang, and Jie
  Huang.
\newblock Cascade speculative drafting for even faster llm inference.
\newblock {\em Advances in Neural Information Processing Systems},
  37:86226--86242, 2024.

\bibitem{du2024glide}
Cunxiao Du, Jing Jiang, Xu~Yuanchen, Jiawei Wu, Sicheng Yu, Yongqi Li, Shenggui
  Li, Kai Xu, Liqiang Nie, Zhaopeng Tu, et~al.
\newblock Glide with a cape: a low-hassle method to accelerate speculative
  decoding.
\newblock In {\em Proceedings of the 41st International Conference on Machine
  Learning}, pages 11704--11720, 2024.

\bibitem{du2025quasi}
Yingpeng Du, Hongzhi Liu, Hengshu Zhu, Yang Song, Zhi Zheng, and Zhonghai Wu.
\newblock Quasi-metric learning for bilateral person-job fit.
\newblock {\em IEEE Transactions on Pattern Analysis and Machine Intelligence},
  2025.

\bibitem{du2024enhancing}
Yingpeng Du, Di~Luo, Rui Yan, Xiaopei Wang, Hongzhi Liu, Hengshu Zhu, Yang
  Song, and Jie Zhang.
\newblock Enhancing job recommendation through llm-based generative adversarial
  networks.
\newblock In {\em Proceedings of the AAAI Conference on Artificial
  Intelligence}, volume~38, pages 8363--8371, 2024.

\bibitem{du2025active}
Yingpeng Du, Zhu Sun, Ziyan Wang, Haoyan Chua, Jie Zhang, and Yew-Soon Ong.
\newblock Active large language model-based knowledge distillation for
  session-based recommendation.
\newblock In {\em Proceedings of the AAAI Conference on Artificial
  Intelligence}, volume~39, pages 11607--11615, 2025.

\bibitem{fu2024break}
Yichao Fu, Peter Bailis, Ion Stoica, and Hao Zhang.
\newblock Break the sequential dependency of llm inference using lookahead
  decoding.
\newblock In {\em International Conference on Machine Learning}, pages
  14060--14079. PMLR, 2024.

\bibitem{grattafiori2024llama}
Aaron Grattafiori, Abhimanyu Dubey, Abhinav Jauhri, Abhinav Pandey, Abhishek
  Kadian, Ahmad Al-Dahle, Aiesha Letman, Akhil Mathur, Alan Schelten, Alex
  Vaughan, et~al.
\newblock The llama 3 herd of models.
\newblock {\em arXiv preprint arXiv:2407.21783}, 2024.

\bibitem{he2024rest}
Zhenyu He, Zexuan Zhong, Tianle Cai, Jason Lee, and Di~He.
\newblock Rest: Retrieval-based speculative decoding.
\newblock In {\em Proceedings of the 2024 Conference of the North American
  Chapter of the Association for Computational Linguistics: Human Language
  Technologies (Volume 1: Long Papers)}, pages 1582--1595, 2024.

\bibitem{hooper2023speed}
Coleman Hooper, Sehoon Kim, Hiva Mohammadzadeh, Hasan Genc, Kurt Keutzer, Amir
  Gholami, and Sophia Shao.
\newblock Speed: Speculative pipelined execution for efficient decoding.
\newblock {\em arXiv preprint arXiv:2310.12072}, 2023.

\bibitem{hou2024large}
Yupeng Hou, Junjie Zhang, Zihan Lin, Hongyu Lu, Ruobing Xie, Julian McAuley,
  and Wayne~Xin Zhao.
\newblock Large language models are zero-shot rankers for recommender systems.
\newblock In {\em European Conference on Information Retrieval}, pages
  364--381. Springer, 2024.

\bibitem{joglekar2020neural}
Manas~R Joglekar, Cong Li, Mei Chen, Taibai Xu, Xiaoming Wang, Jay~K Adams,
  Pranav Khaitan, Jiahui Liu, and Quoc~V Le.
\newblock Neural input search for large scale recommendation models.
\newblock In {\em Proceedings of the 26th ACM SIGKDD International Conference
  on Knowledge Discovery \& Data Mining}, pages 2387--2397, 2020.

\bibitem{kim2023speculative}
Sehoon Kim, Karttikeya Mangalam, Suhong Moon, Jitendra Malik, Michael~W
  Mahoney, Amir Gholami, and Kurt Keutzer.
\newblock Speculative decoding with big little decoder.
\newblock {\em Advances in Neural Information Processing Systems},
  36:39236--39256, 2023.

\bibitem{kool2020ancestral}
Wouter Kool, Herke Van~Hoof, and Max Welling.
\newblock Ancestral gumbel-top-k sampling for sampling without replacement.
\newblock {\em Journal of Machine Learning Research}, 21(47):1--36, 2020.

\bibitem{leviathan2023fast}
Yaniv Leviathan, Matan Kalman, and Yossi Matias.
\newblock Fast inference from transformers via speculative decoding.
\newblock In {\em International Conference on Machine Learning}, pages
  19274--19286. PMLR, 2023.

\bibitem{li2024eagle}
Yuhui Li, Fangyun Wei, Chao Zhang, and Hongyang Zhang.
\newblock Eagle: speculative sampling requires rethinking feature uncertainty.
\newblock In {\em Proceedings of the 41st International Conference on Machine
  Learning}, pages 28935--28948, 2024.

\bibitem{liu2022generalized}
Hongzhi Liu, Yingpeng Du, and Zhonghai Wu.
\newblock Generalized ambiguity decomposition for ranking ensemble learning.
\newblock {\em Journal of Machine Learning Research}, 23(88):1--36, 2022.

\bibitem{liu2024online}
Xiaoxuan Liu, Lanxiang Hu, Peter Bailis, Alvin Cheung, Zhijie Deng, Ion Stoica,
  and Hao Zhang.
\newblock Online speculative decoding.
\newblock In {\em Proceedings of the 41st International Conference on Machine
  Learning}, pages 31131--31146, 2024.

\bibitem{meng2024ranked}
Chuan Meng, Negar Arabzadeh, Arian Askari, Mohammad Aliannejadi, and Maarten
  de~Rijke.
\newblock Ranked list truncation for large language model-based re-ranking.
\newblock In {\em Proceedings of the 47th International ACM SIGIR Conference on
  Research and Development in Information Retrieval}, pages 141--151, 2024.

\bibitem{miao2024specinfer}
Xupeng Miao, Gabriele Oliaro, Zhihao Zhang, Xinhao Cheng, Zeyu Wang, Zhengxin
  Zhang, Rae Ying~Yee Wong, Alan Zhu, Lijie Yang, Xiaoxiang Shi, et~al.
\newblock Specinfer: Accelerating large language model serving with tree-based
  speculative inference and verification.
\newblock In {\em Proceedings of the 29th ACM International Conference on
  Architectural Support for Programming Languages and Operating Systems, Volume
  3}, pages 932--949, 2024.

\bibitem{monea2023pass}
Giovanni Monea, Armand Joulin, and Edouard Grave.
\newblock Pass: Parallel speculative sampling.
\newblock {\em arXiv preprint arXiv:2311.13581}, 2023.

\bibitem{murphy2012machine}
Kevin~P Murphy.
\newblock {\em Machine learning: a probabilistic perspective}.
\newblock MIT press, 2012.

\bibitem{nguyen2016ms}
Tri Nguyen, Mir Rosenberg, Xia Song, Jianfeng Gao, Saurabh Tiwary, Rangan
  Majumder, and Li~Deng.
\newblock Ms marco: A human-generated machine reading comprehension dataset.
\newblock 2016.

\bibitem{parry2024top}
Andrew Parry, Sean MacAvaney, and Debasis Ganguly.
\newblock Top-down partitioning for efficient list-wise ranking.
\newblock {\em arXiv preprint arXiv:2405.14589}, 2024.

\bibitem{pradeep2023rankvicuna}
Ronak Pradeep, Sahel Sharifymoghaddam, and Jimmy Lin.
\newblock Rankvicuna: Zero-shot listwise document reranking with open-source
  large language models.
\newblock {\em arXiv preprint arXiv:2309.15088}, 2023.

\bibitem{pradeep2023Rankzephyr}
Ronak Pradeep, Sahel Sharifymoghaddam, and Jimmy Lin.
\newblock Rankzephyr: Effective and robust zero-shot listwise reranking is a
  breeze!
\newblock {\em arXiv preprint arXiv:2312.02724}, 2023.

\bibitem{reddy2024first}
Revanth~Gangi Reddy, JaeHyeok Doo, Yifei Xu, Md~Arafat Sultan, Deevya Swain,
  Avirup Sil, and Heng Ji.
\newblock First: Faster improved listwise reranking with single token decoding.
\newblock In {\em Proceedings of the 2024 Conference on Empirical Methods in
  Natural Language Processing}, pages 8642--8652, 2024.

\bibitem{santilli2023accelerating}
Andrea Santilli, Silvio Severino, Emilian Postolache, Valentino Maiorca,
  Michele Mancusi, Riccardo Marin, Emanuele Rodola, et~al.
\newblock Accelerating transformer inference for translation via parallel
  decoding.
\newblock In {\em Proceedings of the 61st Annual Meeting of the Association for
  Computational Linguistics (Volume 1: Long Papers)}, volume~1, pages
  12336--12355. Association for Computational Linguistics, 2023.

\bibitem{shao2024deepseekmath}
Zhihong Shao, Peiyi Wang, Qihao Zhu, Runxin Xu, Junxiao Song, Xiao Bi, Haowei
  Zhang, Mingchuan Zhang, YK~Li, Y~Wu, et~al.
\newblock Deepseekmath: Pushing the limits of mathematical reasoning in open
  language models.
\newblock {\em arXiv preprint arXiv:2402.03300}, 2024.

\bibitem{spectoraccelerating}
Benjamin~Frederick Spector and Christopher Re.
\newblock Accelerating llm inference with staged speculative decoding.
\newblock In {\em Workshop on Efficient Systems for Foundation Models@
  ICML2023}.

\bibitem{stern2018blockwise}
Mitchell Stern, Noam Shazeer, and Jakob Uszkoreit.
\newblock Blockwise parallel decoding for deep autoregressive models.
\newblock {\em Advances in Neural Information Processing Systems}, 31, 2018.

\bibitem{sun2023spectr}
Ziteng Sun, Ananda~Theertha Suresh, Jae~Hun Ro, Ahmad Beirami, Himanshu Jain,
  and Felix Yu.
\newblock Spectr: Fast speculative decoding via optimal transport.
\newblock {\em Advances in Neural Information Processing Systems},
  36:30222--30242, 2023.

\bibitem{touvron2023llama1}
Hugo Touvron, Thibaut Lavril, Gautier Izacard, Xavier Martinet, Marie-Anne
  Lachaux, Timoth{\'e}e Lacroix, Baptiste Rozi{\`e}re, Naman Goyal, Eric
  Hambro, Faisal Azhar, et~al.
\newblock Llama: Open and efficient foundation language models.
\newblock {\em arXiv preprint arXiv:2302.13971}, 2023.

\bibitem{touvron2023llama2}
Hugo Touvron, Louis Martin, Kevin Stone, Peter Albert, Amjad Almahairi, Yasmine
  Babaei, Nikolay Bashlykov, Soumya Batra, Prajjwal Bhargava, Shruti Bhosale,
  et~al.
\newblock Llama 2: Open foundation and fine-tuned chat models.
\newblock {\em arXiv preprint arXiv:2307.09288}, 2023.

\bibitem{wang2025re2llm}
Ziyan Wang, Yingpeng Du, Zhu Sun, Haoyan Chua, Kaidong Feng, Wenya Wang, and
  Jie Zhang.
\newblock Re2llm: Reflective reinforcement large language model for
  session-based recommendation.
\newblock In {\em Proceedings of the AAAI Conference on Artificial
  Intelligence}, volume~39, pages 12827--12835, 2025.

\bibitem{wu2024survey}
Likang Wu, Zhi Zheng, Zhaopeng Qiu, Hao Wang, Hongchao Gu, Tingjia Shen, Chuan
  Qin, Chen Zhu, Hengshu Zhu, Qi~Liu, et~al.
\newblock A survey on large language models for recommendation.
\newblock {\em World Wide Web}, 27(5):60, 2024.

\bibitem{xia2023speculative}
Heming Xia, Tao Ge, Peiyi Wang, Si-Qing Chen, Furu Wei, and Zhifang Sui.
\newblock Speculative decoding: Exploiting speculative execution for
  accelerating seq2seq generation.
\newblock In {\em Findings of the Association for Computational Linguistics:
  EMNLP 2023}, pages 3909--3925, 2023.

\bibitem{xia2024unlocking}
Heming Xia, Zhe Yang, Qingxiu Dong, Peiyi Wang, Yongqi Li, Tao Ge, Tianyu Liu,
  Wenjie Li, and Zhifang Sui.
\newblock Unlocking efficiency in large language model inference: A
  comprehensive survey of speculative decoding.
\newblock In {\em ACL (Findings)}, 2024.

\bibitem{yang2024qwen2}
An~Yang, Baosong Yang, Beichen Zhang, Binyuan Hui, Bo~Zheng, Bowen Yu,
  Chengyuan Li, Dayiheng Liu, Fei Huang, Haoran Wei, et~al.
\newblock Qwen2. 5 technical report.
\newblock {\em arXiv preprint arXiv:2412.15115}, 2024.

\bibitem{yang2024multi}
Sen Yang, Shujian Huang, Xinyu Dai, and Jiajun Chen.
\newblock Multi-candidate speculative decoding.
\newblock {\em arXiv preprint arXiv:2401.06706}, 2024.

\bibitem{yangpredictive}
Seongjun Yang, Gibbeum Lee, Jaewoong Cho, Dimitris Papailiopoulos, and Kangwook
  Lee.
\newblock Predictive pipelined decoding: A compute-latency trade-off for exact
  llm decoding.
\newblock {\em Transactions on Machine Learning Research}.

\bibitem{zhang2024draft}
Jun Zhang, Jue Wang, Huan Li, Lidan Shou, Ke~Chen, Gang Chen, and Sharad
  Mehrotra.
\newblock Draft\& verify: Lossless large language model acceleration via
  self-speculative decoding.
\newblock In {\em Proceedings of the 62nd Annual Meeting of the Association for
  Computational Linguistics (Volume 1: Long Papers)}, pages 11263--11282, 2024.

\bibitem{zhang2022opt}
Susan Zhang, Stephen Roller, Naman Goyal, Mikel Artetxe, Moya Chen, Shuohui
  Chen, Christopher Dewan, Mona Diab, Xian Li, Xi~Victoria Lin, et~al.
\newblock Opt: Open pre-trained transformer language models.
\newblock {\em arXiv preprint arXiv:2205.01068}, 2022.

\bibitem{zhoudistillspec}
Yongchao Zhou, Kaifeng Lyu, Ankit~Singh Rawat, Aditya~Krishna Menon, Afshin
  Rostamizadeh, Sanjiv Kumar, Jean-Fran{\c{c}}ois Kagy, and Rishabh Agarwal.
\newblock Distillspec: Improving speculative decoding via knowledge
  distillation.
\newblock In {\em The Twelfth International Conference on Learning
  Representations}.

\bibitem{zhu2023large}
Yutao Zhu, Huaying Yuan, Shuting Wang, Jiongnan Liu, Wenhan Liu, Chenlong Deng,
  Haonan Chen, Zheng Liu, Zhicheng Dou, and Ji-Rong Wen.
\newblock Large language models for information retrieval: A survey.
\newblock {\em arXiv preprint arXiv:2308.07107}, 2023.

\end{thebibliography}
\appendix

\section{Prompt for LLMs in IR and RS tasks}\label{sec:prompt}

\begin{tcolorbox}[breakable, colback=yellow!10, colframe=black!75!white, title=Prompt for LLMs in IR task.]
\textit{{\color{red} System}: You are RankLLM, an intelligent assistant that can rank passages based on their relevancy to the query.
\newline
{\color{red} User}: I will provide you with 20 passages, each indicated by an identifier |?|. Rank the passages based on their relevance to the search query: {\color{blue}[can you use paints on leather]}.
\newline
\newline {\color{magenta}|A| In order to remove these and allow the paint to bind, apply rubbing alcohol to a washcloth and thoroughly scrub the area of leather that will be painted. This will strip away the majority of the protective layers.}
\newline {\color{magenta}...}
\newline {\color{magenta}|T| Most solutions used to remove paint will also remove the pigment from your leather furniture. You can try water and mild soap if the paint is water based. If you're feeling confident, wait until the paint is completely dry and use rubbing alcohol or non-oil nail polish with a cotton swab.mmediately blot any spilled drinks to keep the liquid from soaking into the leather. To remove water or other clear drinks, roll a piece of white bread into a ball and blot the stain with it. Dark drinks will need specific care methods.}
\newline
\newline Search Query: {\color{blue}[can you use paints on leather]}.
\newline Rank the 20 passages above based on their relevance to the search query. All the passages should be included and listed using identifiers, in descending order of relevance. The output format should be formulated with |?|>|?| , e.g., |C| > |B| > |A| > |D|, Only respond with the ranking results, do not say any word or explain.}
\end{tcolorbox}

\begin{tcolorbox}[breakable, colback=yellow!10, colframe=black!75!white, title=Prompt for LLMs in RS task.]
\textit{{\color{red} System}: You are a Recommender expert, an intelligent assistant that can rank candidate items (e.g., movies) by analyzing user history behaviors.
\newline
{\color{red} User}: I will first provide you with user's historical behaviors (watching movies). The title of this user watched movie are  {\color{blue}<Princess Bride, The (1987)>, ..., <Dick Tracy (1990)>}. Now, I will provide you with 25 movies, each indicated by an identifier |?|. Please rank these movies based on user preference and her historical behaviors:
\newline\newline {\color{magenta}|A| Honey, I Shrunk the Kids (1989)}
\newline {\color{magenta}...}
\newline {\color{magenta}|Y| Perez Family, The (1995)}
\newline\newline Rank these 25 candidate movies above based on user preference and her historical behaviors. All the items should be included and listed using identifiers, in descending order of relevance. The output format should be formulated with |?|>|?|, e.g., |C| > |B| > |A| > |D|, Only respond with the ranking results, do not say any words or explain.}
\end{tcolorbox}

\setcounter{theorem}{0} 
\section{Detailed proof of theorems}\label{sec:proof}
\begin{theorem} \textbf{(Monotonicity.) }
For $i^*_T<K$, we have $i^*_t<i^*_{t+1}$ for all $t<T$, where {\small $i_t^* = \max\{ i | \forall j \in [0, i],\ \sigma_{t-1}[j] = \arg\max_d P_\text{llm}(d | x^{0}, \sigma_{t-1}[:j]) \}$} denotes the longest prefix of $\sigma_{t-1}$ that is consistent with greedy decoding from model conditioned on the prompt.
\end{theorem}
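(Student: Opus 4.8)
The plan is to prove the stronger per-step statement that $i^*_{t+1}\ge i^*_t+1$ whenever $i^*_t<K$, and then deduce the theorem by observing that $\{i^*_t\}$ is non-decreasing with $K$ as an absorbing value. The whole argument is driven by the deterministic structure of the update rule in \autoref{def_sigma}: the first $i^*_t$ positions are copied verbatim from $\sigma_{t-1}$, position $i^*_t+1$ is overwritten with the exact greedy choice of the target LLM, and only the tail is resampled. I therefore expect the proof to be short and essentially combinatorial, with no probabilistic content beyond the fact that $\arg\max_d P_\text{llm}(\cdot)$ is a well-defined deterministic function.

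First I would verify that the accepted prefix of $\sigma_{t-1}$ is preserved in $\sigma_t$. By the first branch of \autoref{def_sigma}, $\sigma_t[i]=\sigma_{t-1}[i]$ for every $i\le i^*_t$, so in particular $\sigma_t[:j]=\sigma_{t-1}[:j]$ for all $j\le i^*_t$. Since each such position already satisfies the greedy-consistency condition $\sigma_{t-1}[j]=\arg\max_d P_\text{llm}(d\mid x^0,\sigma_{t-1}[:j])$ by the definition of $i^*_t$, and both the token at $j$ and its conditioning prefix are unchanged, the same equality holds with $\sigma_{t-1}$ replaced by $\sigma_t$. Thus every position up to $i^*_t$ of $\sigma_t$ remains greedy-consistent.

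Next I would show that the \emph{newly corrected} position $i^*_t+1$ is also greedy-consistent in $\sigma_t$. The second branch of \autoref{def_sigma} sets $\sigma_t[i^*_t+1]=\arg\max_d P_\text{llm}(d\mid x^0,\sigma_{t-1}[:i^*_t])$, and since the conditioning prefix is unchanged, i.e.\ $\sigma_t[:i^*_t]=\sigma_{t-1}[:i^*_t]$ from the previous step, this equals $\arg\max_d P_\text{llm}(d\mid x^0,\sigma_t[:i^*_t])$, which is precisely the greedy-consistency condition at position $i^*_t+1$. Combined with the preceding paragraph, all positions up to $i^*_t+1$ are greedy-consistent, so by the maximality in the definition of $i^*_{t+1}$ we conclude $i^*_{t+1}\ge i^*_t+1$, independently of whatever is sampled in the tail by the third branch.

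Finally I would assemble the global statement. The per-step bound shows $\{i^*_t\}$ strictly increases as long as it stays below $K$, and that once $i^*_t=K$ the update leaves $\sigma_t=\sigma_{t-1}$, so the value is absorbed at $K$. Consequently, if $i^*_t=K$ held for some $t\le T$ it would propagate forward to give $i^*_T=K$, contradicting the hypothesis $i^*_T<K$; hence $i^*_t<K$ for every $t\le T$, and the per-step bound yields $i^*_t<i^*_{t+1}$ for all $t<T$. I expect the only delicate point to be the bookkeeping of the off-by-one indexing among the accept length $i^*_t$, the copied prefix, and the corrected position $i^*_t+1$; once the conventions that $\arg\max$ is deterministic and that $\sigma_t[:i^*_t]=\sigma_{t-1}[:i^*_t]$ are fixed, the remaining steps are immediate.
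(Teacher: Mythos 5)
Your proposal is correct and follows essentially the same route as the paper's proof: both arguments show that the copied prefix ($i \le i^*_t$) stays greedy-consistent because the conditioning prefixes are unchanged, and that the corrected position $i^*_t+1$ is greedy-consistent by construction of the second branch of the update rule, yielding $i^*_{t+1} \ge i^*_t + 1$. Your final paragraph handling the absorption at $K$ and the role of the hypothesis $i^*_T < K$ is actually spelled out more carefully than in the paper, which invokes that hypothesis only implicitly.
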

\begin{proof} With $i^*_T<K$ and $t+1<T$, we have
    {\small\begin{align*}
i_{t+1}^* &= \max\{ i | \forall j \in [0, i],\ \sigma_{t}[j] = \arg\max_d P_\text{llm}(d | x^{0}, \sigma_{t}[:j]) \}\\
&\overset{(a)}=\max\{ i | \forall j \in [i_{t+1}^*, i],\ \sigma_{t}[j] = \arg\max_d P_\text{llm}(d | x^{0}, \sigma_{t}[:j]) \}\\
&\overset{(b)}= \max \left\{i_{t}^*+1, \max\{ i | \forall j \in [i_{t+2}^*, i],\ \sigma_{t}[j] = \arg\max_d P_\text{llm}(d | x^{0}, \sigma_{t}[:j]) \}\right\}\\
&> i^*_t
\end{align*}}

\noindent where step (a) is established because for {\small $j\le i_{t}^*$ we have $\sigma_{t}[j]=\sigma_{t-1}[j] = \arg\max_d P_\text{llm}(d | x^{0}, \sigma_{t-1}[:j])=\arg\max_d P_\text{llm}(d | x^{0}, \sigma_{t}[:j])$} according to case $i\le i^*_t$ of \autoref{def_sigma}. Step (b) is established because {\small $\sigma_{t}[i_{t}^*+1] \overset{\text{(I)}}=\arg\max_d P_\text{llm}(d | x^{0}, \sigma_{t-1}[:i_{t}^*]) \overset{\text{(II)}}= \arg\max_d P_\text{llm}(d | x^{0}, \sigma_{t}[:i_{t}^*])$} according to case $i= i^*_t+1$ for (I) and $i\le i^*_t$ for (II), respectively.
\end{proof}

\begin{theorem}
 If we have $\pi_{\theta}/\pi_{\theta_{old}}\to 1$ (i.e., $|\pi_{\theta}/\pi_{\theta_{old}}- 1|<\epsilon$) and $\mathcal{G}\sim\pi_{\theta_{old}}$, we have our RPO objective is equivalent to GRPO objective (Equation (3) in \cite{shao2024deepseekmath}) in the first term w.r.t. the model parameter $\theta$, that is,
 {\small 
\begin{equation*}
    \mathcal{L}_{RPO-1} \approx \mathbb{E}_{Q,\pi_{\theta_{old}}}\frac{1}{G}\sum_{i=1}^{G}\frac{1}{T}\sum_{t=1}^T\{\min\left[\frac{\pi_\theta(\sigma_t^{(i)}|q,s^{(i)}_{<t})}{\pi_{\theta_{old}}(\sigma_t^{(i)}|q,s^{(i)}_{<t})}\hat{A}_i,\text{clip}\left(\frac{\pi_\theta(\sigma_t^{(i)}|q,s^{(i)}_{<t})}{\pi_{\theta_{old}}(\sigma_t^{(i)}|q,s^{(i)}_{<t})},1-\epsilon,1+\epsilon \right)\hat{A}_i\right] + C
\end{equation*}}

\noindent where $C$ is a $\theta$-independent constant.
\end{theorem}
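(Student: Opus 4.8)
The plan is to start from the RPO-1 objective and show that, under the two stated hypotheses, it agrees with the GRPO objective up to a $\theta$-independent additive constant $C$. The natural approach is to introduce the importance ratio $\rho_t^{(i)} := \pi_\theta(\sigma_t^{(i)} \mid q, s_{<t}^{(i)}) / \pi_{\theta_{old}}(\sigma_t^{(i)} \mid q, s_{<t}^{(i)})$ and rewrite both objectives in terms of it. Since $\mathcal{G} \sim \pi_{\theta_{old}}$ by assumption, the trajectories used in RPO-1 are exactly those sampled under the old policy, so the expectations $\mathbb{E}_{[q \sim Q,\, \mathcal{G}_q \sim \pi_\theta]}$ and $\mathbb{E}_{Q, \pi_{\theta_{old}}}$ coincide, which lets me match the two outer expectations directly.

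The key analytic step is a first-order expansion of $\log \rho_t^{(i)}$ around $\rho_t^{(i)} = 1$. First I would write $\log \pi_\theta(\sigma_t^{(i)} \mid \cdots) = \log \pi_{\theta_{old}}(\sigma_t^{(i)} \mid \cdots) + \log \rho_t^{(i)}$, so that the summand in $\mathcal{L}_{RPO-1}$ becomes $\hat{A}_i \log \pi_{\theta_{old}}(\cdots) + \hat{A}_i \log \rho_t^{(i)}$; the first piece is independent of $\theta$ and is absorbed into $C$. For the second piece, I would use the hypothesis $|\rho_t^{(i)} - 1| < \epsilon$ and the expansion $\log \rho = (\rho - 1) + O((\rho-1)^2)$, so that $\hat{A}_i \log \rho_t^{(i)} \approx \hat{A}_i (\rho_t^{(i)} - 1) = \hat{A}_i \rho_t^{(i)} - \hat{A}_i$, where the trailing $-\hat{A}_i$ is again $\theta$-independent and folds into $C$. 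This leaves $\hat{A}_i \rho_t^{(i)}$, which is precisely the unclipped surrogate term of GRPO.

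The remaining task is to argue that, under $|\rho_t^{(i)} - 1| < \epsilon$, the GRPO $\min$--$\text{clip}$ expression collapses to the same unclipped term $\rho_t^{(i)} \hat{A}_i$. Here I would observe that when the ratio already lies in $[1-\epsilon, 1+\epsilon]$, the clipping operator acts as the identity, so $\text{clip}(\rho_t^{(i)}, 1-\epsilon, 1+\epsilon) = \rho_t^{(i)}$ and hence $\min[\rho_t^{(i)} \hat{A}_i,\, \rho_t^{(i)} \hat{A}_i] = \rho_t^{(i)} \hat{A}_i$, regardless of the sign of $\hat{A}_i$. Combining this with the expansion above yields the claimed equivalence up to the constant $C$.

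I expect the main obstacle to be bookkeeping rather than any single deep step: carefully tracking which terms are genuinely $\theta$-independent (so that collecting them into $C$ is legitimate) and making the approximation $\approx$ precise by controlling the $O((\rho-1)^2)$ remainder uniformly in $\epsilon$. A secondary subtlety is that the KL term and the per-round averaging $\tfrac{1}{T}\sum_t$ must be handled consistently between the two objectives; I would confirm that the KL regularizer is explicitly excluded from the comparison (the statement restricts attention to the \emph{first term} $\mathcal{L}_{RPO-1}$), so it does not enter the argument.
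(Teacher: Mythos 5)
Your proposal is correct and follows essentially the same route as the paper's own proof: decompose $\log\pi_\theta = \log\pi_{\theta_{old}} + \log(\pi_\theta/\pi_{\theta_{old}})$, absorb the $\theta$-independent pieces (including the $-\hat{A}_i$ left over after the first-order expansion $\log\rho \approx \rho - 1$) into $C$, and then observe that under $|\rho - 1| < \epsilon$ the clip operator is the identity and $\min(x,x) = x$, collapsing the GRPO surrogate to the unclipped term $\rho\,\hat{A}_i$. Your added remarks — that the advantages are $\theta$-independent because $\mathcal{G}\sim\pi_{\theta_{old}}$, and that the KL regularizer is excluded since the claim concerns only $\mathcal{L}_{RPO-1}$ — are consistent with (and slightly more careful than) the paper's argument.
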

\begin{proof} 
\begin{align*}
    \mathcal{L}_{RPO-1} &= \mathbb{E}_{Q,\pi_{\theta_{old}}}\frac{1}{G}\sum_{i=1}^{G}\frac{1}{T}\sum_{t=1}^T\hat{A}_i\log\pi_\theta(\sigma_t^{(i)}|q,s^{(i)}_{<t})\\
    &= \mathbb{E}_{Q,\pi_{\theta_{old}}}\frac{1}{G}\sum_{i=1}^{G}\frac{1}{T}\sum_{t=1}^T\hat{A}_i[\log\pi_{\theta_{old}}(\sigma_t^{(i)}|q,s_{<t}^{(i)})+\log\frac{\pi_{\theta}(\sigma_t^{(i)}|q,s_{<t}^{(i)})}{\pi_{\theta_{old}}(\sigma_t^{(i)}|q,s_{<t}^{(i)})}]
\end{align*}
According to $|\pi_{\theta}/\pi_{\theta_{old}}- 1|\to 0$, we have the following formulation based Taylor expansion:
\begin{align}
    \mathcal{L}_{RPO-1} &= \mathbb{E}_{Q,\pi_{\theta_{old}}}[\frac{1}{G}\sum_{i=1}^{G}\frac{1}{T}\sum_{t=1}^T\hat{A}_i\cdot\log[\frac{\pi_{\theta}(\sigma_t^{(i)}|q,s_{<t}^{(i)})}{\pi_{\theta_{old}}(\sigma_t^{(i)}|q,s_{<t}^{(i)})}-1+1]] +C_1 \notag \\
    &= \mathbb{E}_{Q,\pi_{\theta_{old}}}[\frac{1}{G}\sum_{i=1}^{G}\frac{1}{T}\sum_{t=1}^T\hat{A}_i\cdot\frac{\pi_{\theta}(\sigma_t^{(i)}|q,s_{<t}^{(i)})}{\pi_{\theta_{old}}(\sigma_t^{(i)}|q,s^{(i)}_{<t})}-1 +O(\frac{\pi_{\theta}(\sigma_t^{(i)}|q,s_{<t}^{(i)})}{\pi_{\theta_{old}}(\sigma_t^{(i)}|q,s_{<t}^{(i)})}-1)] +C\notag  \\
    &\approx \mathbb{E}_{Q,\pi_{\theta_{old}}}[\frac{1}{G}\sum_{i=1}^{G}\frac{1}{T}\sum_{t=1}^T\hat{A}_i\cdot[\frac{\pi_{\theta}(\sigma_t^{(i)}|q,s_{<t}^{(i)})}{\pi_{\theta_{old}}(\sigma_t^{(i)}|q,s^{(i)}_{<t})}-1]] +C_1 \notag \\
    &=  \mathbb{E}_{Q,\pi_{\theta_{old}}}[\frac{1}{G}\sum_{i=1}^{G}\frac{1}{T}\sum_{t=1}^T\hat{A}_i\cdot\frac{\pi_{\theta}(\sigma_t^{(i)}|q,s_{<t}^{(i)})}{\pi_{\theta_{old}}(\sigma_t^{(i)}|q,s^{(i)}_{<t})}] +C_2 \label{eq:rpo}
\end{align}
For $|\pi_{\theta}/\pi_{\theta_{old}}- 1|\to 0$, we have $-\epsilon<\pi_{\theta}/\pi_{\theta_{old}}- 1<\epsilon$ with specific $\epsilon>0$. Therefore, $\text{clip}\left(\frac{\pi_\theta(\sigma_t^{(i)}|q,s^{(i)}_{<t})}{\pi_{\theta_{old}}(\sigma_t^{(i)}|q,s^{(i)}_{<t})},1-\epsilon,1+\epsilon \right) =  \frac{\pi_\theta(\sigma_t^{(i)}|q,s_{<t}^{(i)})}{\pi_{\theta_{old}}(\sigma_t^{(i)}|q,s^{(i)}_{<t})}$ and $\min(x,x)=x$ are always established. Applying these formulations into \autoref{eq:rpo}, the conclusion in the theorem is easily established.

\end{proof}

\begin{lemma} \label{lem:1}
For the group-mean advantage $\hat{A}_i^\text{group}$ and reference advantage $\hat{A}_i^\text{ref}$, we have $\mathbb E(\hat{A}_i^\text{group})=0$ and $\mathbb E(\hat{A}_i^\text{ref})=0$ by assigning $\pi_{\theta_{ref}} \gets \pi_{\theta}$ in the training process.
\end{lemma}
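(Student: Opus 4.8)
The plan is to establish both identities from a single observation: once $\pi_{\theta_{ref}}\gets\pi_\theta$, every return that appears in the two estimators is generated by one and the same policy, so all of these returns share a common mean and the subtraction cancels it exactly. First I would fix a query $q$ and recall that the group $\mathcal{G}_q=\{\sigma^{(1)},\dots,\sigma^{(G)}\}$ consists of i.i.d.\ draws from $\pi_\theta(\cdot\mid q)$; hence the returns $R_1,\dots,R_G$ all have the same conditional mean, which I denote $\mu_q:=\mathbb{E}[R_i\mid q]$ and which does not depend on the index $i$.

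For the group-mean baseline I would apply linearity of expectation to $\hat{A}_i^\text{group}=R_i-\hat{\mu}_{-i}$ with $\hat{\mu}_{-i}=\frac{1}{G-1}\sum_{j\neq i}R_j$, obtaining $\mathbb{E}[\hat{A}_i^\text{group}\mid q]=\mu_q-\frac{1}{G-1}(G-1)\mu_q=0$; the tower property then gives $\mathbb{E}[\hat{A}_i^\text{group}]=0$ after averaging over $q\sim Q$. For the reference baseline the decisive point is that the assignment $\pi_{\theta_{ref}}\gets\pi_\theta$ makes the reference trajectory a draw from the very same conditional law, so $\mathbb{E}[R_\text{ref}\mid q]=\mu_q$ as well; therefore $\mathbb{E}[\hat{A}_i^\text{ref}\mid q]=\mathbb{E}[R_i\mid q]-\mathbb{E}[R_\text{ref}\mid q]=0$, and once more the conditioning is removed by the tower property.

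An equivalent and shorter route, which also anticipates the variance analysis of Theorem~3, is to adopt the additive decomposition $R_i=\mu+\varepsilon_i$ and $R_\text{ref}=\mu+\delta$ in which the query-level term $\mu$ is shared across trajectories and $\varepsilon_i,\delta$ are zero-mean. Then $\hat{A}_i^\text{group}=\varepsilon_i-\frac{1}{G-1}\sum_{j\neq i}\varepsilon_j$ and $\hat{A}_i^\text{ref}=\varepsilon_i-\delta$ are differences of zero-mean quantities, so their expectations vanish immediately.

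I expect the only genuine subtlety to be conceptual rather than computational: the cancellation for $\hat{A}_i^\text{ref}$ rests entirely on the reference return carrying the same conditional mean $\mu_q$ (equivalently, the same shared term $\mu$) as the sampled returns, which holds precisely because $\pi_{\theta_{ref}}\gets\pi_\theta$. I would therefore state this identification of laws explicitly before invoking linearity, since dropping the assignment would let the reference policy induce a different return distribution and break the mean-zero property.
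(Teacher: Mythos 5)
Your proposal is correct and follows essentially the same argument as the paper's proof: all returns share a common mean because the reference policy is identified with the current policy, so the baseline subtraction cancels in expectation. Your write-up is in fact slightly more careful than the paper's two-line version --- you condition on the query and use the correct normalization $\tfrac{1}{G-1}\sum_{j\neq i}$ (the paper's proof writes $\tfrac{1}{G}\cdot G\cdot\mathbb{E}(R)$, a harmless slip), but the underlying idea is identical.
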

\begin{proof}
    For group-mean advantage, we have $\mathbb E(\hat{A}_i^\text{group})=\mathbb E(R)-\frac{1}{G}\cdot G\cdot\mathbb E(R)=0$. For reference advantage, we have $\mathbb E(\hat{A}_i^\text{ref})=\mathbb E(R)-\mathbb E(R)=0$
\end{proof}

\begin{lemma}[Zero-Mean Score Function]
\label{lem:2}
Let $\{\pi_{\theta}(\sigma_t|q,s_{<t})\bigr\}_{q\sim Q,\sigma_q\sim\pi_{\theta}(\cdot|q)}$ be a family of probability, continuously differentiable with respect to the parameter
vector $\theta\in\mathbb R^{d}$.  
Define the \emph{score function}
\[
g(\sigma_t|q,s_{<t})\;:=\;\nabla_{\theta}\log\pi_{\theta}(\sigma_t|q,s_{<t})
           \;=\;
           \Bigr(
             \nabla_{\theta_{1}}\log\pi_{\theta}(\sigma_t|q,s_{<t}),
             \dots,
             \nabla_{\theta_{d}}\log\pi_{\theta}(\sigma_t|q,s_{<t})
           \Bigr).
\]
Then for every coordinate $k\in\{1,\dots,d\}$, we have $\mathbb{E}[g_{k}(\sigma_t|q,s_{<t})\bigr] = 0$.

\end{lemma}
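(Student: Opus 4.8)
The plan is to invoke the classical ``score function has zero expectation'' identity, which follows from the log-derivative trick combined with the fact that $\pi_\theta(\cdot \mid q, s_{<t})$ is a normalized probability distribution for every $\theta$. First I would condition on a fixed query $q$ and a fixed history state $s_{<t}$, reducing the randomness to the single draw $\sigma_t \sim \pi_\theta(\cdot \mid q, s_{<t})$. Writing the inner expectation explicitly as a sum over the finite set of admissible rankings, for each coordinate $k$ I would apply the identity $\nabla_{\theta_k}\log\pi_\theta = (\nabla_{\theta_k}\pi_\theta)/\pi_\theta$, so that the probability weight cancels and the sum collapses to $\sum_{\sigma_t}\nabla_{\theta_k}\pi_\theta(\sigma_t\mid q, s_{<t})$.

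The second step is to interchange the coordinate derivative $\nabla_{\theta_k}$ with the summation over rankings. Because the support contains only $K!$ permutations, this is a finite sum and the interchange is justified termwise by the assumed continuous differentiability of each $\pi_\theta$; no dominated-convergence argument is required. After the interchange, the expression becomes $\nabla_{\theta_k}\sum_{\sigma_t}\pi_\theta(\sigma_t\mid q, s_{<t}) = \nabla_{\theta_k}(1) = 0$, since the conditional distribution normalizes to one independently of $\theta$. Thus $\mathbb{E}[g_k(\sigma_t\mid q, s_{<t})\mid q, s_{<t}] = 0$ for every fixed conditioning.

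Finally I would remove the conditioning by the tower property: since the conditional expectation of $g_k$ vanishes for every $(q, s_{<t})$, the full expectation over $q \sim Q$ and the trajectory history is also zero. The only point demanding care is the interplay between the derivative-summation interchange and the fact that $s_{<t}$ itself depends on $\theta$ through earlier sampling. The resolution is that the zero-mean property concerns only differentiating the \emph{conditional} law $\pi_\theta(\sigma_t\mid q, s_{<t})$ with $s_{<t}$ held fixed; the $\theta$-dependence of the history never enters the computation, so the normalization argument goes through unchanged. This is therefore the main obstacle more in framing than in difficulty, and once the conditioning is correctly isolated the identity is immediate.
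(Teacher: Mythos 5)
Your proposal is correct and follows essentially the same route as the paper's proof: both differentiate the normalization identity of the policy distribution with respect to $\theta_k$ and apply the log-derivative trick $\nabla_{\theta_k}\pi_{\theta} = \pi_{\theta}\,\nabla_{\theta_k}\log\pi_{\theta}$ to recognize the resulting sum as the expectation of the score function. The only difference is organizational --- you condition on $(q, s_{<t})$ first and then invoke the tower property, whereas the paper differentiates the joint normalization over $q$ and $\sigma$ in one step --- and your explicit justification of the derivative--sum interchange (finite support) and of holding the $\theta$-dependent history $s_{<t}$ fixed are refinements of points the paper leaves implicit.
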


\begin{proof}
As $Q(\cdot)$ and $\pi_{\theta}(\cdot)$ is a probability distribution, we have 

\[\sum\nolimits_{q\sim\mathcal{Q}}\sum\nolimits_{\sigma\sim\pi_{\theta}(\cdot|q)}Q(q)\cdot\pi_{\theta}(\sigma_t|q,s_{<t})=1\]

Differentiate both sides with respect to the $k$-th parameter~$\theta_k$:

\begin{align*}
0 &=\sum_{q\sim\mathcal{Q}}\sum_{\sigma\sim\pi_{\theta}(\cdot|q)}{Q(q)\cdot\nabla_{\theta_{k}}\pi_{\theta}(\sigma_t|q,s_{<t})}\\
&=\sum_{q\sim\mathcal{Q}}\sum_{\sigma\sim\pi_{\theta}(\cdot|q)}Q(q)\cdot\pi_{\theta}(\tau)\,
  {\nabla_{\theta_{k}}\log\pi_{\theta}(\sigma_t|q,s_{<t})}\\
&=\mathbb{E}_{q\sim\mathcal{Q},\sigma\sim\pi_{\theta}(\cdot|q)}[g_{k}(\sigma_t|q,s_{<t})\bigr].
\end{align*}

\end{proof}

\begin{theorem}
By  assigning $\pi_{\theta_{ref}} \gets \pi_{\theta}$ in the training process, we have both updating the gradient in $\mathcal{L}_{RPO-1}$ w.r.t. group average advantage $\hat{A}_i^\text{group}$ and reference advantage $\hat{A}_i^\text{ref}$ is unbiased, i.e.,
\begin{equation*}
 \mathbb E[(R_i-B)\cdot \nabla_{\theta_{d}}\log\pi_{\theta}(\sigma_t|q,s_{<t})]=\mathbb E[R\cdot\nabla_{\theta_{d}}\log\pi_{\theta}(\sigma_t|q,s_{<t})]
\end{equation*}
where $B=\hat{\mu}_{-i}$ or $B = R_\text{ref}$. Suppose the return of $R_i= \mu + \varepsilon_i$ and $R_{\text{ref}} = \mu + \delta$ can be decomposed into query-level reward (related to the difficulty of the question), i.e., $\mu\sim \mathcal N\!(0,\sigma_b^{2}\bigr)$, and and temperature noise (related to the random sampling), i.e., $\delta\sim \mathcal N\!(0,\sigma_\delta^{2}\bigr)$ and $\varepsilon_i \stackrel{\text{i.i.d.}}{\sim} \mathcal N\!(0,\sigma_w^{2}\bigr)$ with $\varepsilon_i\perp\delta$, we have their variance 
\begin{equation*}
    Var[(R-R_\text{ref})\cdot \nabla_{\theta_{d}}\log\pi_{\theta}(\sigma_t|q,s_{<t})] < Var[(R-\hat{\mu}_{-i})\cdot \nabla_{\theta_{d}}\log\pi_{\theta}(\sigma_t|q,s_{<t})]
\end{equation*}
 if and only if  $ \sigma_\delta^{2} < {\sigma_w^{2} }/{(G-1)}$.

\end{theorem}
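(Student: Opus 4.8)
The plan is to prove the two assertions separately: first the \emph{unbiasedness} of both baseline choices, then the \emph{variance} comparison, reusing the unbiasedness result to cancel the common squared-mean term. Throughout I write $g := \nabla_{\theta_d}\log\pi_\theta(\sigma_t\mid q,s_{<t})$ for the score.

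For unbiasedness it suffices to show $\mathbb{E}[B\,g]=0$ for each baseline $B$, since then $\mathbb{E}[(R_i-B)g]=\mathbb{E}[R_ig]$. I would condition on the query $q$ and invoke the conditional form of Lemma~\ref{lem:2}, namely $\mathbb{E}[g\mid q]=0$, which follows by differentiating $\sum_\sigma\pi_\theta(\sigma_t\mid q,s_{<t})=1$ for fixed $q$ rather than the joint normalization. For $B=\hat\mu_{-i}$ the baseline is a function only of the trajectories $j\neq i$, hence independent of trajectory $i$'s action given $q$, so $\mathbb{E}[\hat\mu_{-i}\,g\mid q]=\mathbb{E}[\hat\mu_{-i}\mid q]\,\mathbb{E}[g\mid q]=0$. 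For $B=R_{\text{ref}}$, the assignment $\pi_{\theta_{ref}}\gets\pi_\theta$ makes the reference an independent draw from the same policy, so $R_{\text{ref}}$ is again independent of trajectory $i$ given $q$ and the identical argument yields $\mathbb{E}[R_{\text{ref}}\,g\mid q]=0$. Averaging over $q$ closes both cases.

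For the variance I would start from $\mathrm{Var}[(R_i-B)g]=\mathbb{E}[(R_i-B)^2g^2]-(\mathbb{E}[(R_i-B)g])^2$ and note that, by the unbiasedness just established, the subtracted square equals $(\mathbb{E}[R_ig])^2$ for \emph{both} baselines; hence the comparison reduces to the second moment $\mathbb{E}[(R_i-B)^2g^2]$. Substituting the noise model, the centered rewards become $R_i-R_{\text{ref}}=\varepsilon_i-\delta$ and $R_i-\hat\mu_{-i}=\varepsilon_i-\bar\varepsilon_{-i}$ with $\bar\varepsilon_{-i}:=\tfrac{1}{G-1}\sum_{j\neq i}\varepsilon_j$, and the query-level term $\mu$ cancels in both. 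Expanding each square and using that $\delta$ and $\{\varepsilon_j\}_{j\neq i}$ are each independent of the pair $(\varepsilon_i,g)$ (independent sampling together with $\varepsilon_i\perp\delta$), the cross terms $\mathbb{E}[\varepsilon_i\,\delta\,g^2]$ and $\mathbb{E}[\varepsilon_i\,\bar\varepsilon_{-i}\,g^2]$ vanish while the correlated term $\mathbb{E}[\varepsilon_i^2g^2]$ appears identically in both. What survives is $\mathbb{E}[(R_i-R_{\text{ref}})^2g^2]=\mathbb{E}[\varepsilon_i^2g^2]+\sigma_\delta^2\,\mathbb{E}[g^2]$ versus $\mathbb{E}[(R_i-\hat\mu_{-i})^2g^2]=\mathbb{E}[\varepsilon_i^2g^2]+\tfrac{\sigma_w^2}{G-1}\mathbb{E}[g^2]$, using $\mathbb{E}[\delta^2]=\sigma_\delta^2$ and $\mathbb{E}[\bar\varepsilon_{-i}^2]=\sigma_w^2/(G-1)$. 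Subtracting, the difference of variances equals $\bigl(\sigma_\delta^2-\tfrac{\sigma_w^2}{G-1}\bigr)\mathbb{E}[g^2]$, which is strictly negative exactly when $\sigma_\delta^2<\sigma_w^2/(G-1)$, giving the claimed iff under $\mathbb{E}[g^2]>0$.

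The main obstacle is the product structure $(R-B)^2g^2$: a naive factorization $\mathbb{E}[(R-B)^2]\,\mathbb{E}[g^2]$ would demand $\varepsilon_i\perp g$, which is \emph{not} granted since a trajectory's own reward noise and its own score are generally correlated. The key realization that rescues the argument is that this factorization is unnecessary: the only independence actually invoked is that of the \emph{competing} randomness ($\delta$ or $\bar\varepsilon_{-i}$) from trajectory $i$, which forces the $\varepsilon_i$--$g$ correlation to enter both second moments solely through the identical term $\mathbb{E}[\varepsilon_i^2g^2]$ and cancel in the difference. Care is also needed to deploy the per-$q$ conditional version of Lemma~\ref{lem:2} in the unbiasedness step, and to record that $\sigma_b^2$ never appears because $\mu$ drops out of every centered reward.
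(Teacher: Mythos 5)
Your proof is correct and reaches the paper's conclusion, but your variance argument takes a genuinely different — and in fact more careful — route than the paper's. For unbiasedness, both you and the paper rely on the zero-mean score function plus independence of the baseline from trajectory $i$; your version is slightly more rigorous in that you condition on $q$ and use $\mathbb{E}[g\mid q]=0$, whereas the paper factorizes unconditional expectations $\mathbb{E}[g^{(i)}]\cdot\mathbb{E}[\hat\mu_{-i}]$ even though $g^{(i)}$ and $\hat\mu_{-i}$ are only independent given the query. The real divergence is in the variance step: the paper sets $X=R_i-B$, $Y=g$ and invokes the product-variance identity $\mathrm{Var}[XY]=\mathbb{E}[X^2]\mathbb{E}[Y^2]-(\mathbb{E}[XY])^2$, which silently requires the trajectory's own centered reward to be independent of its own score — exactly the assumption you identify as generally false, since $\varepsilon_i$ and $g$ both depend on the sampled $\sigma^{(i)}$. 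The paper then reduces the comparison to $\mathrm{Var}[R_i-R_{\mathrm{ref}}]=\sigma_w^2+\sigma_\delta^2$ versus $\mathrm{Var}[R_i-\hat\mu_{-i}]=\sigma_w^2\,G/(G-1)$. You instead compare the second moments $\mathbb{E}[(R_i-B)^2g^2]$ directly (the subtracted squares coincide by unbiasedness), expand, and use only independence of the \emph{competing} randomness $\delta$ or $\tfrac{1}{G-1}\sum_{j\neq i}\varepsilon_j$ from the pair $(\varepsilon_i,g)$, so the problematic correlated term $\mathbb{E}[\varepsilon_i^2g^2]$ appears identically on both sides and cancels, leaving $\bigl(\sigma_\delta^2-\tfrac{\sigma_w^2}{G-1}\bigr)\mathbb{E}[g^2]$. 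What your approach buys is validity under strictly weaker hypotheses: it repairs the paper's unjustified factorization while recovering the same iff condition, and you correctly flag the extra regularity needed for strictness ($\mathbb{E}[g^2]>0$), which the paper also needs implicitly via $\sigma_g^2>0$. What the paper's approach buys is only brevity — a two-line reduction to reward variances — at the cost of an independence assumption that its own final expression (retaining a nonzero $\mathbb{E}[gR_i]$) contradicts.
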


\begin{proof}
\textbf{Unbiasedness.}  Suppose $B = \hat{\mu}_{-i}$ and $B = R_{\text{ref}}$, we have their estimation to $\mathbb{E}[g_k^{(i)}\cdot(R_i-B_i)] = \mathbb{E}[g_k^{(i)}\cdot R_i]$. Specifically, the group trajectories $\mathcal{G}_q = \{(R_i, g_k(\sigma_t^{(i)}|q,s_{<t})) | i=1,\cdots,G\} \sim \pi_\theta$ are i.i.d., satisfying $\hat{\mu}_{-i} \perp g^{(i)}_k(\sigma_t|q,s_{<t})$. Therefore, 
\[
\mathbb{E}[g_k^{(i)}(R_i - \hat{\mu}_{-i})] = \mathbb{E}[g_k^{(i)} R_i] + \mathbb{E}[g_k^{(i)}] \cdot \mathbb{E}[\hat{\mu}_{-i}] = \mathbb{E}[g_k^{(i)} R_i]
\]
according to Lemma~\autoref{lem:2}. Sampling from the reference model $\pi_{\theta_{\text{ref}}}$ is independent from sampling from the current policy  $\pi_{\theta}$, satisfying $R_{\text{ref}} \perp g_k$. Therefore, 
\[
\mathbb{E}[g_k^{(i)}(R_i - R_{\text{ref}})] = \mathbb{E}[g_k^{(i)}\cdot R_i] + \mathbb{E}[g_k^{(i)}] \cdot \mathbb{E}[R_{\text{ref}}] = \mathbb{E}[g^{(i)}_k R_i].
\]
In summary, no matter whether $B = \hat{\mu}_{-i}$ or $B = R_{\text{ref}}$, their estimations towards $\mathbb{E}[g^{(i)}_k R_i]$ are unbiased.

\textbf{Variance.}
Suppose we have two variables $X$ and $Y$ satisfying $X\bot Y$, we have 
\begin{equation*}
    Var[XY] = \mathbb E[X^2]\mathbb E[Y^2] - (\mathbb E[XY])^2.
\end{equation*}
Assigning $X=R_i-B$ and $Y=g_k^{(i)}$, we have 
\begin{align*}
        Var[(R_i-B)g_k^{(i)}] &{=} \mathbb E[(g_k^{(i)})^2]\cdot \mathbb E[(R_i-B)^2] - (\mathbb E[g_k^{(i)}(R_i-B)])^2\\
        & = \sigma^2_g\cdot \mathbb E[(R_i-B)^2] - (\mathbb E[g_k^{(i)}\cdot(R_i-B)])^2\\
        & \overset{(a)}= \sigma^2_g\cdot\{Var[R_i-B]+(\mathbb E [R_i-B])^2\} -(\mathbb E(g_k^{(i)}\cdot R_i)-\mathbb E(g_k^{(i)})\mathbb E(R_i))^2 \\
        & \overset{(b)}= \sigma^2_g\cdot Var[R_i-B] -(\mathbb E(g_k^{(i)}\cdot R_i)-\mathbb E(g_k^{(i)})\mathbb E(R_i))^2 \\
        &\overset{(c)}= \sigma^2_g\cdot Var[R_i-B] -(\mathbb E(g_k^{(i)}\cdot R_i))^2
\end{align*}
where step (a) is established because $R_\text{ref}\bot(R_i,g_k^{(i)})$, step (b) is established because $\mathbb E [R_i-B]=0$ according to Lemma \autoref{lem:1}, step (c) is established  because $\mathbb E [g_k^{(i)}]=0$ according to Lemma \autoref{lem:2}. Therefore, comparing their gradient variance is equivalent to comparing the variance of their advantages, i.e.,
\begin{equation*}
    Var[(R-R_\text{ref})g_k^{(i)}] - Var[(R-\hat{\mu}_{-i})g_k^{(i)}] \Leftrightarrow Var[R_i-R_\text{ref}] - Var[R_i-\hat{\mu}_{-i}]
\end{equation*}
For these advantage variances, suppose the return of $R_i= \mu + \varepsilon_i$ and $R_{\text{ref}} = \mu + \delta$ can be decomposed into query-level reward (related to the difficulty of the question), i.e., $\mu\sim \mathcal N\!(0,\sigma_b^{2}\bigr)$, and temperature noise (related to the random sampling), i.e., $\delta\sim \mathcal N\!(0,\sigma_\delta^{2}\bigr)$ and $\varepsilon_i \stackrel{\text{i.i.d.}}{\sim} \mathcal N\!(0,\sigma_w^{2}\bigr)$ with $\varepsilon_i\perp\delta$, they can be easily formulated as follows:
\begin{align*}
    Var[R_i-R_\text{ref}]  = Var[\varepsilon_i-\delta] = \sigma_w^{2} + \sigma_\delta^{2}
\end{align*}
\begin{align*}
Var[R_{pi} - \widehat{\mu}_{-i}]
  &= Var[(\mu + \varepsilon_{i}\bigr)
     -
     (
       \mu
       + \frac{1}{G-1}\sum_{j\neq i}\varepsilon_{j}
     \Bigr)]\\
  &= Var[\varepsilon_{i}
     -
     \frac{1}{G-1}\sum_{j\neq i}\varepsilon_{j}]= \sigma_w^{2}\frac{G}{G-1}
\end{align*}
Therefore, $Var[R-R_\text{ref}] - Var[R-\hat{\mu}_{-i}]<0$ if and only if
\begin{equation*}
     \sigma_\delta^{2} < \frac{\sigma_w^{2} }{G-1}
\end{equation*}
\end{proof}

\section{Evaluation metrics for rank similarity}\label{ranking_measure}

\subsection{Kendall-tau}
Kendall’s tau measures the ordinal association between two rankings based on concordant and discordant pairs:
\[
\tau = \frac{C - D}{\tfrac{1}{2}K(K - 1)}
\]
where $C$ is the number of concordant pairs and $D$ is the number of discordant pairs, defined as:
\[
C = \sum_{1 \le i < j \le n} \mathbf{1}\left[ (r_1(i) - r_1(j))(r_2(i) - r_2(j)) > 0 \right]
\]
\[
D = \sum_{1 \le i < j \le n} \mathbf{1}\left[ (r_1(i) - r_1(j))(r_2(i) - r_2(j)) < 0 \right]
\]
where $r_1(i)$ and $r_2(i)$ denote the rank of item $i$ in rankings $r_1$ and $r_2$, respectively. The indicator function $\mathbf{1}[\cdot]$ equals 1 when the condition is true, and 0 otherwise.

The value range is $\tau \in [-1, 1]$, where larger values (closer to 1) indicate higher similarity.

\subsection{Spearman-rho}
Spearman’s rho evaluates rank correlation by computing the squared differences of ranks:
\[
\rho = 1 - \frac{6 \sum_{i=1}^{K}(r_1(i) - r_2(i))^2}{K(K^2 - 1)}
\]

The value range is $\rho \in [-1, 1]$, where larger values (closer to 1) indicate higher similarity.

\subsection{Footrule distance}
Footrule distance is the sum of absolute rank differences:
\[
D_{\text{Footrule}}(r_1, r_2) = \sum_{i=1}^{K} |r_1(i) - r_2(i)|
\]
The maximum value depends on the permutation, but for full rankings, i.e., $D_{\text{Footrule}} \in [0, \tfrac{K^2}{2}]$, where smaller values indicate higher similarity.

\subsection{Kemeny distance}
Kemeny distance counts the number of discordant pairs between two rankings. For rankings $r_1$ and $r_2$, it is defined as:
\[
D_{\text{Kemeny}}(r_1, r_2) = \sum_{1 \le i < j \le K} \mathbf{1} \left[ (r_1(i) < r_1(j)) \ne (r_2(i) < r_2(j)) \right]
\]
The value range is $D_{\text{Kemeny}} \in [0, \tfrac{K(K-1)}{2}]$, where smaller values indicate higher similarity.

\section{Introduction and Adaption of Baseline Methods}\label{sec_baseline}

For baseline methods, we adopt the following state-of-the-art methods with the STD strategy, i.e., Rankzephyr and FIRST, and methods with the SD strategy, i.e., Speculative Decoding Method (SDM) \cite{leviathan2023fast}, Blockwise \cite{stern2018blockwise}, Medusa \cite{cai2024medusa}, and Parallel Decoding Method (PDM) \cite{santilli2023accelerating}, as well as a hybrid method combining FIRST and the SD strategy, i.e., FIRST + Speculative Decoding. As there is a difference between our task and those of these methods, necessary modifications need to be adopted for fair comparison.

\begin{itemize}
\item \textbf{Rankzephyr} is trained using a standard language modeling objective, with the ranking generated by GPT-4 as the target ranking. For fair comparison, we adopt the ranking generated by the target LLM as the target ranking, and we rank indicated by the logits of its first token position matches that of its fully generated ranking. 

\item \textbf{FIRST} adopts a listwise LLM reranking approach that utilizes a learning-to-rank loss and SFT loss for fine-tuning. For fair comparison, we adopt the ranking generated by the target LLM as the ground truth ranking.

\item \textbf{Speculative Decoding Method (SDM)} proposes a SD strategy that can accelerate decoding from self-regressive models without any change to the model architecture. For fair comparison, we adopt the item ranking indicated by the logits at the last rejection position as the output of the approximation model. To rank all candidate items within a limited budget, we directly rank the unaccepted items based on the logits at the last rejection position when the budget is reached.

\item \textbf{Blockwise} proposes a blockwise parallel decoding scheme to make predictions for multiple time steps in parallel. To enable it to rank all candidate items within a limited budget, we set the number of prediction heads equal to the number of candidate items for drafting and adopt a Top-5 selection strategy for verification. Once the budget is reached, we directly rank the unaccepted items based on their prediction heads.

\item \textbf{Medusa} adopts a tree-based attention mechanism to construct multiple candidate continuations and verifies them simultaneously at each decoding step. To enable it to rank all candidate items within a limited budget, we set the number of prediction heads equal to the number of candidate items. Once the budget is reached, we directly rank the unaccepted items based on their prediction heads. For fair comparison, we adopt Medusa-1, directly fine-tuned on top of a frozen backbone LLM, verifying candidate continuations with a tree of size [1, 3, 3, 3] for the remaining items, where the cumulative number of new tokens is twice that of ours.

\item \textbf{Parallel Decoding Method (PDM)} proposes a parallel formulation leveraging Jacobi and Gauss-Seidel fixed-point iteration methods for fast inference. For fair comparison, we return the current decoded item ranking when the budget is reached.
\end{itemize}

\section{The additional tables and figures in experiments.}

\renewcommand{\arraystretch}{1.6}
\begin{table}[H]
  \centering
  \fontsize{8}{8}\selectfont
  \caption{{\small Performance of different methods on \textbf{IR tasks} with the LLM backbone \textbf{Qwen2.5-7B-Instruct}. * indicates statistically significant improvement of RSD to baselines on t-test (p < 0.05).}}
\begin{tabular}{||c||c|c|c|c||c|c|c|c||}\midrule
    Dataset              & \multicolumn{4}{c||}{MS MARCO}                                        & \multicolumn{4}{c||}{Quora}                                           \\\midrule
Method               & KT $\uparrow$             & SR $\uparrow$             & FD  $\downarrow$           & KD    $\downarrow$         & KT  $\uparrow$            & SR     $\uparrow$         & FD   $\downarrow$          & KD  $\downarrow$           \\\midrule
Random               & 0.003          & 0.007          & 132.30         & 94.72          & -0.012         & -0.016         & 209.69         & 151.77         \\
Speculative Decoding & 0.548          & 0.680          & 63.85          & 42.97          & 0.454          & 0.588          & 116.88         & 81.95          \\
Blockwise            & 0.6179	&0.7438	&55.33	&36.30          & 0.503          & 0.630          & 107.17         & 74.53          \\
Mesuda               & 0.624          & 0.751          & 53.55          & 35.74          & 0.521          & 0.653          & 102.81         & 71.82          \\
Parallel Decoding    & 0.504          & 0.630          & 106.17         & 74.45          & 0.504          & 0.630          & 106.17         & 74.45          \\
Rankzephyr           & 0.351          & 0.471          & 87.97          & 61.62          & 0.262          & 0.370          & 153.68         & 110.68         \\
FIRST                & 0.166          & 0.206          & { 112.46}   & 79.18          & 0.109          & 0.161          & 184.52         & 133.58         \\
FIRST + SD           & 0.168          & 0.181          & 112.69         & 79.05          & 0.062          & 0.104          & 192.50         & 140.66         \\
RSD (ours)                 & \textbf{0.700*} & \textbf{0.822*} & \textbf{43.41*} & \textbf{28.54*} & \textbf{0.550*} & \textbf{0.686*} & \textbf{97.02*} & \textbf{67.51} \\ \midrule
Imprv.               & 3.54\%         & 4.09\%         & 9.54\%         & 7.38\%         & 5.52\%         & 5.11\%         & 5.63\%         & 6.01\%         \\ \midrule
    \end{tabular}\label{appendix_table1}
\end{table}

\renewcommand{\arraystretch}{1.6}
\begin{table}[H]
  \centering
  \fontsize{8}{8}\selectfont
  \caption{{\small Performance of different methods on \textbf{RS tasks} with the LLM backbone \textbf{Llama-3.2-3B-Instruct}. * indicates statistically significant improvement of RSD to baselines on t-test (p < 0.05).}}
\begin{tabular}{||c||c|c|c|c||c|c|c|c||}\midrule
    Dataset              & \multicolumn{4}{c||}{ML-1M}                                        & \multicolumn{4}{c||}{Amazon-Games}                                           \\\midrule
Method               & KT $\uparrow$             & SR $\uparrow$             & FD  $\downarrow$           & KD    $\downarrow$         & KT  $\uparrow$            & SR     $\uparrow$         & FD   $\downarrow$          & KD  $\downarrow$           \\\midrule
Random               & 0.0041          & 0.0070          & 206.53         & 149.38         & 0.0080          & 0.0110          & 206.38         & 148.80         \\
Speculative Decoding & 0.3571          & 0.4608          & 136.44         & 96.44          & 0.3571          & 0.4608          & 136.44         & 96.44          \\
Blockwise            & 0.5468          & 0.6563          & 99.16          & 67.98          & 0.5412          & 0.6464          & 97.90          & 68.82          \\
Mesuda               & 0.3907          & 0.4757          & 131.52         & 91.39          & 0.4815          & 0.5823          & 109.66         & 77.78          \\
Parallel Decoding    & 0.4863          & 0.6015          & 113.22         & 77.06          & 0.5168          & 0.6291          & 108.34         & 72.48          \\
Rankzephyr           & 0.0491          & 0.0740          & 199.00         & 142.63         & -0.0628         & -0.0732         & 222.67         & 159.42         \\
FIRST                & 0.0729          & 0.1083          & 195.23         & 139.07         & 0.1632          & 0.2223          & 176.81         & 125.52         \\
FIRST + SD           & 0.3442          & 0.4468          & 139.23         & 98.37          & 0.3931          & 0.4996          & 127.81         & 91.03          \\
RSD (ours)                 & \textbf{0.6426*} & \textbf{0.7659*} & \textbf{78.57*} & \textbf{53.61*} & \textbf{0.6901*} & \textbf{0.8080*} & \textbf{68.38*} & \textbf{46.48*} \\\midrule
Imprv.               & 17.52\%         & 16.70\%         & 20.77\%        & 21.14\%        & 27.52\%         & 25.01\%         & 30.15\%        & 32.46\%      \\ \midrule
    \end{tabular}\label{appendix_table2}
\end{table}

\renewcommand{\arraystretch}{1.6}
\begin{table}[H]
  \centering
  \fontsize{8}{8}\selectfont
  \caption{{\small Ablation studies on {IR and RS tasks} with the different LLM backbones. }}
  \begin{tabular}{@{}|c|c|cccc|cccc|@{}}
\midrule\multirow{2}{*}{Dataset}         & Backbone & \multicolumn{4}{c|}{Llama-3.2-3B}                                      & \multicolumn{4}{c|}{Qwen2.5-7B}                                        \\
& Method   & KT $\uparrow$             & SR $\uparrow$             & FD  $\downarrow$           & KD    $\downarrow$         & KT  $\uparrow$            & SR     $\uparrow$         & FD   $\downarrow$          & KD  $\downarrow$           \\\midrule
\multirow{6}{*}{MS MARCO}     & STD      & 0.3042          & 0.4060          & 96.256          & 66.104          & 0.3529          & 0.4757          & 88.040          & 61.470          \\
& GSD      & 0.6360          & 0.7523          & 49.516          & 34.582          & 0.6337          & 0.7543          & 50.100          & 34.794          \\
& w/o RPO  & 0.6797          & 0.8088          & 46.048          & 30.426          & 0.6823          & 0.8088          & 45.420          & 30.180          \\
& w/o  LR  & 0.6907          & 0.8180          & 44.804          & 29.388          & 0.6789          & 0.8033          & 45.842          & 30.505          \\
& w/o RA   & 0.6065          & 0.7452          & 55.104          & 37.386          & 0.5343          & 0.6688          & 64.004          & 44.242          \\
& RSD      & \textbf{0.7169} & \textbf{0.8371} & \textbf{40.964} & \textbf{26.896} & \textbf{0.6996} & \textbf{0.8217} & \textbf{43.408} & \textbf{28.536} \\\midrule\midrule
\multirow{6}{*}{Quora}        & STD      & 0.0722          & 0.1106          & 192.320         & 139.166         & 0.2825          & 0.3914          & 152.868         & 107.618         \\
& GSD      & 0.5586          & 0.6821          & 94.184          & 66.214          & 0.5301          & 0.6586          & 99.068          & 70.490          \\
& w/o RPO  & 0.6324          & 0.7555          & 80.896          & 55.144          & 0.6283          & 0.7616          & 82.332          & 55.750          \\
& w/o  LRK  & 0.6274          & 0.7493          & 81.904          & 55.884          & 0.6292          & 0.7622          & 82.388          & 55.624          \\
& w/o RA   & 0.5376          & 0.6779          & 98.832          & 69.358          & 0.5214          & 0.6608          & 105.304         & 71.788          \\
& RSD      & \textbf{0.6426} & \textbf{0.7659} & \textbf{78.568} & \textbf{53.614} & \textbf{0.6404} & \textbf{0.7716} & \textbf{80.376} & \textbf{53.946} \\\midrule\midrule
\multirow{6}{*}{ML-1M}        & STD      & 0.2401          & 0.3352          & 161.912         & 113.986         & 0.2427          & 0.3405          & 158.600         & 113.590         \\
& GSD      & 0.5684          & 0.6951          & 91.128          & 64.742          & 0.5259          & 0.6580          & 99.848          & 71.118          \\
& w/o RPO  & 0.6380          & 0.7739          & 79.708          & 54.304          & 0.5419          & 0.6791          & 98.444          & 68.718          \\
& w/o  LRK  & 0.6383          & 0.7739          & 79.840          & 54.260          & 0.5420          & 0.6784          & 98.724          & 68.700          \\
& w/o RA   & 0.5511          & 0.6914          & 97.012          & 67.328          & 0.4524          & 0.5852          & 115.816         & 82.138          \\
& RSD      & \textbf{0.6454} & \textbf{0.7824} & \textbf{78.808} & \textbf{53.186} & \textbf{0.5499} & \textbf{0.6860} & \textbf{97.016} & \textbf{67.510} \\\midrule\midrule
\multirow{6}{*}{Amazon-Games} & STD      & 0.0889          & 0.1333          & 190.636         & 136.666         & 0.2328          & 0.3273          & 160.436         & 115.084         \\
& GSD      & 0.5839          & 0.7064          & 88.200          & 62.412          & 0.5787          & 0.7073          & 88.808          & 63.188          \\
& w/o RPO  & 0.6766          & 0.7991          & 70.160          & 48.516          & 0.6571          & 0.7976          & 76.364          & 51.436          \\
& w/o  LRK  & 0.6880          & 0.8076          & 67.992          & 46.798          & 0.6515          & 0.7930          & 77.552          & 52.268          \\
& w/o RA   & 0.6479          & 0.7805          & 76.204          & 52.818          & 0.5551          & 0.6953          & 97.528          & 66.734          \\
& RSD      & \textbf{0.6901} & \textbf{0.8080} & \textbf{68.380} & \textbf{46.480} & \textbf{0.6832} & \textbf{0.8178} & \textbf{71.040} & 47.524         
\\ \midrule
    \end{tabular}\label{appendix_ablation}
\end{table}


\end{document}